\newcommand{\twomax}{\textsc{Twomax}\xspace}
\newcommand{\onemax}{\textsc{Onemax}\xspace}
\newcommand{\zeromax}{\textsc{Zeromax}\xspace}
\newcommand{\harm}{\mathrm{H_n}}
\newcommand{\hamm}{\mathrm{H}}
\newcommand{\distw}[1]{\hamm(#1, \winner)}
\newcommand{\mindist}{d}
\newcommand{\polylog}[1]{\mathrm{polylog}(#1)}
\newcommand{\poly}[1]{\mathrm{poly}(#1)}
\newcommand{\Hamm}[2]{\hamm\mathord{\left(#1,#2\right)}}
\newcommand{\g}{\mathrm{G}}
\newcommand{\G}[2]{\g\mathord{\left(#1,#2\right)}}
\newcommand{\cp}{\mathrm{cp}}
\newcommand{\CP}[1]{\cp\mathord{\left(#1\right)}}
\newcommand{\B}{\mathcal{B}}
\newcommand{\ones}[1]{\left|#1\right|_1}
\newcommand{\winner}{x^{*}}
\newcommand{\di}{\mathrm{d}}
\newcommand{\dist}[2]{\di\mathord{\left(#1,#2\right)}}
\DeclareMathOperator*{\argmin}{arg\,min}
\newcommand{\muea}{{\upshape{}($\mu$+1)~EA}\xspace}
\newcommand{\mupluslambdaea}{{\upshape{}($\mu$+$\lambda$)~EA}\xspace}
\newcommand{\e}{\mathrm{E}}
\newcommand{\E}[1]{\e\mathord{\left(#1\right)}}
\newcommand{\prob}{\mathrm{Prob}}
\newcommand{\Prob}[1]{\prob\mathord{\left(#1\right)}}
\newcommand{\ie}{i.\,e.,\xspace}
\newcommand{\clearing}{{\it clearing}\xspace}
\newcommand{\clearingradius}{{\it clearing radius}\xspace}
\newcommand{\nichecapacity}{{\it niche capacity}\xspace}
\newcommand{\IFTHENELSE}[3]{\STATE \algorithmicif\ #1\ \algorithmicthen\ #2\ \algorithmicelse\ #3\ \algorithmicendif}
\newcommand{\ignore}[1]{}
\newtheorem{theorem}{Theorem}[section]
\newtheorem{lemma}[theorem]{Lemma}
\newtheorem{corollary}[theorem]{Corollary}
\newtheorem{definition}[theorem]{Definition}
\pgfplotsset{compat=1.14}
\pgfplotsset{
	enlargelimits=true,
  	grid=major,
  	scale only axis,
}
\title{On the Runtime Analysis of the Clearing Diversity-Preserving Mechanism}
\author{Edgar Covantes Osuna and Dirk Sudholt\\\normalsize{}Department of Computer Science\\\normalsize{}University of Sheffield, United Kingdom}
\begin{document}
\maketitle
\begin{abstract}
Clearing is a niching method inspired by the principle of assigning the available resources among a niche to a single individual. The clearing procedure supplies these resources only to the best individual of each niche: the winner. So far, its analysis has been focused on experimental approaches that have shown that clearing is a powerful diversity-preserving mechanism. Using rigorous runtime analysis to explain how and why it is a powerful method, we prove that a mutation-based evolutionary algorithm with a large enough population size, and a phenotypic distance function always succeeds in optimising all functions of unitation for small niches in polynomial time, while a genotypic distance function requires exponential time. Finally, we prove that with phenotypic and genotypic distances clearing is able to find both optima for \twomax and several general classes of bimodal functions in polynomial expected time. We use empirical analysis to highlight some of the characteristics that makes it a useful mechanism and to support the theoretical results.
\end{abstract}

\section{Introduction}
\label{sec:intro}

Evolutionary Algorithms (EAs) with elitist selection are suitable to locate the optimum of unimodal functions as they converge to a single solution of the search space. This behaviour is also one of the major difficulties in a population-based EA, the premature convergence toward a suboptimal individual before the fitness landscape is explored properly. Real optimisation problems, however, often lead to multimodal domains and so require the identification of multiple optima, either local or global~\citep{Sareni1998,Singh2006}.

In multimodal optimisation problems, there exist many attractors for which finding a global optimum can become a challenge to any optimisation algorithm. A diverse population can deal with multimodal functions and can explore several hills in the fitness landscape simultaneously, so they can therefore support global exploration and help to locate several local and global optima. The algorithm can offer several good solutions to the user, a feature desirable in multiobjective optimisation. Also, it provides higher chances to find dissimilar individuals and to create good offspring with the possibility of enhancing the performance of other procedures such as crossover~\citep{Friedrich2009}.

Diversity-preserving mechanisms provide the ability to visit many and/or different unexplored regions of the search space and generate solutions that differ in various significant ways from those seen before~\citep{Gendreau2010,Lozano2010}. Most analyses and comparisons made between diversity-preserving mechanisms are assessed by means of empirical investigations~\citep{Chaiyaratana2007,Ursem2002} or theoretical runtime analyses~\citep{Jansen2005,Friedrich2007,Oliveto2014,Oliveto2014a,Gao2014,Doerr2016}. Both approaches are important to understand how these mechanisms impact the EA runtime and if they enhance the search for obtaining good individuals. These different results imply where/which diversity-preserving mechanism should be used and, perhaps even more importantly, where they should not be used.

One particular way for diversity maintenance are the niching methods, such methods are based on the mechanics of natural ecosystems. A niche can be viewed as a subspace in the environment that can support different types of life. A specie is defined as a group of individuals with similar features capable of interbreeding among themselves but that are unable to breed with individuals outside their group. Species can be defined as similar individuals of a specific niche in terms of similarity metrics. In EAs the term niche is used for the search space domain, and species for the set of individuals with similar characteristics. By analogy, niching methods tend to achieve a natural emergence of niches and species in the search space~\citep{Sareni1998}.

A niching method must be able to form and maintain multiple, diverse, final solutions for an exponential to infinite time period with respect to population size, whether these solutions are of identical fitness or of varying fitness. Such requirement is due to the necessity to distinguish cases where the solutions found represent a new niche or a niche localised earlier~\citep{Mahfoud1995}.

Niching methods have been developed to reduce the effect of genetic drift resulting from the selection operator in standard EAs. They maintain population diversity and permit the EA to investigate many peaks in parallel. On the other hand, they prevent the EA from being trapped in local optima of the search space~\citep{Sareni1998}. In the majority of algorithms, this effect is attained due to the modification of the process of selection of individuals, which takes into account not only the value of the fitness function but also the distribution of individuals in the space of genotypes or phenotypes~\citep{Glibovets2013}.

Many researchers have suggested methodologies for introducing niche-preserving techniques so that, for each optimum solution, a niche gets formed in the population of an EA. Most of the analyses and comparisons made between niching mechanisms are assessed by means of empirical investigations using benchmark functions~\citep{Sareni1998,Singh2006}. There are examples where empirical investigations are used to support theoretical runtime analyses and close the gap between theory and practice~\citep{Friedrich2009,Oliveto2014a,Oliveto2015,Covantes2017,Covantes2017b}. 

Both fields use artificially designed functions to highlight characteristics of the studied EAs when tackling optimisation problems. They exhibit such properties in a very precise, distinct, and paradigmatic way. Moreover, they can help to develop new ideas for the design of new variants of EAs and other search heuristics. This leads to valuable insights about new algorithms on a solid basis~\citep{Jansen2013}.

Most of the theoretical analyses are made on example functions with a clear and concrete structure so that they are easy to understand. They are defined in a formal way and allow for the derivation of theorems and proofs allowing to develop knowledge about EAs in a sound scientific way. In the case of empirical analyses, most of the results are based on the analysis of more complex example functions and algorithmic frameworks for a specific set of experiments. This approach allows to explore the general characteristics more easily than in the theoretical field.

Our contribution is to provide a rigorous theoretical runtime analysis for the \clearing diversity mechanism, to find out whether the mechanism is able to provide good solutions by means of experiments, and we include theoretical runtime analysis to prove how and why an EA is able to obtain good solutions depending on how the population size, \clearingradius, \nichecapacity, and the dissimilarity measure are chosen.

This manuscript extends a preliminary conference paper~\citep{Covantes2017} in the following ways. We extend the theory for large niches by looking into the choice of the population size. While it was known that a population size of $\mu \ge \kappa n^2/4$ is sufficient~\citep{Covantes2017}, here we show that population sizes of at least $\Omega(n/\polylog{n})$ are necessary to escape from local optima. The reason is that for smaller population sizes, winners in local optima spawn offspring that repeatedly take over the whole population, and this happens before individuals can escape from the optima's basin of attraction. We further extend our analysis to more general classes of examples landscapes defined by~\citet{Jansen2016}, showing that \clearing is effective across a range of functions with different slopes and optima having different basins of attraction. Finally, we extend the experimental analysis for smaller population sizes of $\mu$ with small $n=30$ and large $n=100$. 

In the remainder of this paper, we first present the algorithmic framework in Section~\ref{sec:prelim}. The definition of \clearing, algorithmic approach, and dissimilarity measures are defined in Section~\ref{sec:clearing}. The theoretical analysis is divided in Sections~\ref{sec:small_niches} and~\ref{sec:large_niches} for small and large niches, respectively. In Section~\ref{sec:small_niches} we show how \clearing is able to solve, for small niches and the right distance function, all functions of unitation, and in Section~\ref{sec:large_niches} we show how the population dynamics with large population size in \clearing solves \twomax with the most natural distance function: Hamming distance while it fails with a small population size. In Section~\ref{sec:genexaland} we show that the analysis made in Section~\ref{sec:large_niches} is general enough to be applied into more general function classes. Section~\ref{sec:exp} contains the experimental results showing how well our theoretical results matches with empirical results for the general behaviour of the algorithm and providing a closer look into the impact of the population size on performance. We present our conclusions in Section \ref{sec:conclu}, giving additional insight into the dynamic behaviour of the algorithm.

\section{Preliminaries}
\label{sec:prelim}

We focus our analysis on the simplest EA with a finite population called~\muea (hereinafter, $\mu$ denotes the size of the current population, see Algorithm~\ref{alg:muea}). Our aim is to develop rigorous runtime bounds of the~\muea with the~\clearing diversity mechanism. We want to study how diversity helps to escape from some optima. The~\muea uses random parent selection and elitist selection for survival and has already been investigated by~\cite{Witt2006}.

\begin{algorithm}[!ht]
  \begin{algorithmic}[1]
  	\STATE Let $t:=0$ and initialise $P_0$ with $\mu$ individuals chosen uniformly at random.
    \WHILE{optimum \NOT found} 
    	\STATE{Choose $x \in P_t$ uniformly at random.} 
    	\STATE{Create $y$ by flipping each bit in $x$ independently with probability $1/n$.}
        \STATE{Choose $z \in P_t$ with worst fitness uniformly at random.}
        \IFTHENELSE{$f(y) \geq f(z)$}{$P_{t+1} = P_{t} \setminus \{z\}\cup \{y\}$}
        {$P_{t+1} = P_{t}$}
    	\STATE{Let $t:=t+1$.}
    \ENDWHILE
  \end{algorithmic}
  \caption{\muea}
  \label{alg:muea}
\end{algorithm}

We consider functions of unitation $f:\{0,1\}^n\rightarrow \mathbb{R}$, where $f(x)$ depends only on the number of 1-bits contained in a string $x$ and is always non-negative, \ie $f$ is entirely defined by a function $u:\{0,\ldots,n\}\rightarrow \mathbb{R}^{+}$, $f(x)=u(\ones{x})$, where $\ones{x}$ denotes the number of 1-bits in individual $x$. In particular, we consider the bimodal function of unitation called \twomax (see Definition~\ref{def:twomax}) for the analysis of large niches. \twomax can be seen as a bimodal equivalent of \onemax. The fitness landscape consists of two hills with symmetric slopes. In contrast to \cite{Friedrich2009} where an additional fitness value for $1^n$ was added to distinguish between a local optimum $0^n$ and a unique global optimum, we have opted to use the same approach as \cite{Oliveto2014a}, and leave unchanged \twomax since we aim at analysing the global exploration capabilities of a population-based EA.

\begin{definition}[\twomax]
\label{def:twomax}
A bimodal function of unitation which consists of two different symmetric slopes \zeromax and \onemax with $0^n$ and $1^n$ as global optima, respectively.
\[
\twomax(x):=\max\left\{\sum_{i=1}^{n}x_i,n-\sum_{i=1}^{n}x_i\right\}.
\]

The fitness of ones increases with more than~$n/2$ 1-bits, and the fitness of zeroes increases with less than~$n/2$ 1-bits. These sets are refereed as branches. The aim is to find a population containing both optima (see Figure~\ref{fig:twomax}).
\end{definition}

\begin{figure}[!ht]
	\centering
    \resizebox{.45\linewidth}{!}{
		\begin{tikzpicture}[domain=0:30,xscale=0.15,yscale=0.25,scale=1, every shadow/.style={shadow 
        xshift=0.0mm, shadow yshift=0.4mm}]
          \tikzstyle{helpline}=[black,thick];
          \tikzstyle{function}=[blue,very thick];
          \tikzstyle{optimum}=[red,very thick];
          \draw[gray!20,line width=0.2pt,xstep=1,ystep=1] (0,0) grid (30.5,15.5);
          \draw[function] (0,15) -- (15,0) -- (30,15);
          \draw[helpline, thin, -triangle 45] (0,0) -- (0,17);
          \draw[helpline, thin, -triangle 45] (0,0) -- (32,0) node[right] {\scriptsize \#{}ones};
          \draw[helpline] (0,0) -- (0,16);
          \draw[helpline] (0,0) -- (31,0);
          \draw[helpline] (0,0) -- (-1,0) node[left] {\scriptsize $n/2$};
          \draw[helpline] (0,0) -- (0,-1) node[below] {\scriptsize $0$};
          \draw[helpline] (0,15) -- (-1,15) node[left] {\scriptsize $n$};
          \draw[helpline] (15,0) -- (15,-1) node[below] {\scriptsize $n/2$};
          \draw[helpline] (30,0) -- (30,-1) node[below] {\scriptsize $n$};
          \filldraw[optimum] (0,15) circle (6pt);
          \filldraw[optimum] (30,15) circle (6pt);
		\end{tikzpicture}
        }
	\caption{Sketch of the function \twomax with $n=30$.}
	\label{fig:twomax}
\end{figure}
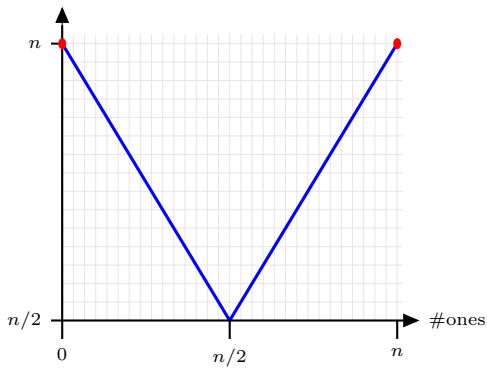

We analyse the expected time until both optima have been reached. \twomax is an ideal benchmark function for \clearing as it is simply structured, hence facilitating a theoretical analysis, and it is hard for EAs to find both optima as they have the maximum possible Hamming distance. Its choice further allows comparisons with previous approaches like in~\cite{Friedrich2009} and \cite{Oliveto2014a} in the context of diversity-preserving mechanisms. 

The \muea with no diversity-preserving mechanism (Algorithm~\ref{alg:muea}) has already been analysed for the \twomax function. The selection pressure is quite low, nevertheless, the \muea is not able to maintain individuals on both branches for a long time. Without any diversification, the whole population of the~\muea collapses into the $0^n$ branch with probability at least~$1/2-o(1)$ (see \citealp{Motwani1995} for the asymptotic notation) in time $n^{n-1}$, once the population contains copies of optimum individuals in one of the two branches, it will be necessary to flip all the bits at the same time in order to allow that individual to survive and find both optima on \twomax, so the expected optimisation time is~$\Omega(n^n)$ \citep[Theorem~1]{Friedrich2009}.

Adding other diversity-preserving mechanisms into the \muea like \emph{avoiding genotype or phenotype duplicates} does not work, the algorithm cannot maintain individuals in both branches, so the population collapse into the $0^n$ branch with probability at least $1/2-o(1)$ with an expected optimisation time of $\Omega(n^{n-1})$ and $2^{\Omega(n)}$ \citep[Theorem 2 and 3]{Friedrich2009}, respectively. \emph{Deterministic crowding} with sufficiently large population is able to reach both optima with high probability in expected time $O(\mu n \log n)$ \citep[Theorem 4]{Friedrich2009}. 

A \emph{modified version of fitness sharing} is analysed in \cite{Friedrich2009}: rather than selecting individuals based on their shared fitness, selection was done on a level of populations. The goal was to select the new population out of the union of all parents and all offspring such that it maximises the overall shared fitness of the population. The drawback of this approach is that all possible size-$\mu$ subsets of this union of size $\mu+\lambda$, where $\lambda$ is the number of offspring, need to be examined. For large $\mu$ and $\lambda$, this is prohibitive. It is proved that a population-based shared fitness approach with $\mu\geq 2$ reaches both optima of \twomax in expected time $O(\mu n\log{n})$ \citep[Theorem 5]{Friedrich2009}. 

In \cite{Oliveto2014a}, the performance of the original \emph{fitness sharing} approach is analysed. The analysis showed that using the conventional (phenotypic) sharing approach leads to considerably different behaviours. A population size $\mu=2$ is not sufficient to find both optima on \twomax in polynomial time: with probability $1/2+\Omega(1)$ the population will reach the same optimum, and from there the expected time to find both optima is $\Omega(n^{n/2})$ \citep[Theorem 1]{Oliveto2014a}. However, there is still a constant probability $\Omega(1)$ to find both optima in polynomial expected time $O(n \log{n})$, if the two search points are initialised on different branches, and if these two search points maintain similar fitness values throughout the run \citep[Theorem~2]{Oliveto2014a}. 

With $\mu\geq 3$, once the population is close enough to one optimum, individuals descending the branch heading towards the other optimum are accepted. This threshold, that allows successful runs with probability $1$, lies further away from the local optimum as the population size increases finding both optima in expected time $O(\mu n\log{n})$ \citep[Theorem 3]{Oliveto2014a}. Concerning the effects of the offspring population, increasing the offspring population of a \mupluslambdaea, with $\mu=2$ and $\lambda\geq \mu$ cannot guarantee convergence to populations with both optima, \ie depending on $\lambda$ one or both optima can get lost, the expected time for finding both optima is $\Omega(n^{n/2})$ \citep[Theorem 4]{Oliveto2014a}.

\section{Clearing}
\label{sec:clearing}
\emph{Clearing} is a niching method inspired by the principle of sharing limited resources within a niche (or subpopulation) of individuals characterised by some similarities. Instead of evenly sharing the available resources among the individuals of a niche, the \clearing procedure supplies these resources only to the best individual of each niche: the winner. The winner takes all rather than sharing resources with the other individuals of the same niche as it is done with fitness sharing~\citep{Petrowski1996}.

Like in fitness sharing, the \clearing algorithm uses a dissimilarity measure given by a threshold called \clearingradius $\sigma$ between individuals to determine if they belong to the same niche or not. The basic idea is to preserve the fitness of the individual that has the best fitness (also called dominant individual), while it resets the fitness of all the other individuals of the same niche to zero\footnote{We tacitly assume that all fitness values are larger than~0 for simplicity. In case of a fitness function~$f$ with negative fitness values we can change clearing to reset fitness to ${f_{\min}-1}$, where $f_{\min}$ is the minimum fitness value of~$f$, such that all reset individuals are worse than any other individuals.}. With such a mechanism, two approaches can be considered. For a given population, the set of winners is unique. The winner and all the individuals that it dominates are then fictitiously removed from the population. Then the algorithm proceeds in the same way with the new population which is then obtained. Thus, the list of all the winners is produced after a certain number of steps.

On the other hand, the population can be dominated by several winners. It is also possible to generalise the \clearing algorithm by accepting several winners chosen among the \nichecapacity $\kappa$ (best individuals of each niche defined as the maximum number of winners that a niche can accept). Thus, choosing niching capacities between one and the population size offers intermediate situations between the maximum \clearing ($\kappa = 1$) and a standard EA ($\kappa\geq\mu$).

Empirical investigations made in~\cite{Petrowski1996,Petrowski1997a,Petrowski1997b,Sareni1998,Singh2006} mentioned that \clearing surpasses all other niching methods because of its ability to produce a great quantity of new individuals by randomly recombining elements of different niches, controlling this production by resetting the fitness of the poor individuals in each different niche. Furthermore, an elitist strategy prevents the rejection of the best individuals.

We incorporate the \clearing method into Algorithm~\ref{alg:muea}, resulting in Algorithm \ref{alg:mueaclea}. The idea behind Algorithm~\ref{alg:mueaclea} is: once a population with $\mu$ individuals is generated, an individual $x$ is selected and changed according to mutation. A temporary population~$P_{t}^{*}$ is created from population $P_t$ and the offspring $y$, then the fitness of each individual in $P_{t}^{*}$ is updated according to the \clearing procedure shown in Algorithm~\ref{alg:clearing}.

\begin{algorithm}[!ht]
  \begin{algorithmic}[1]
  	\STATE Let $t:=0$ and initialise $P_0$ with $\mu$ individuals chosen uniformly at random.
    \WHILE{optimum \NOT found} 
    	\STATE{Choose $x \in P_t$ uniformly at random.} 
    	\STATE{Create $y$ by flipping each bit in $x$ independently with probability $1/n$.}
        \STATE{Let $P_t^* = P_t \cup \{y\}$.}
        \STATE{Update $f(P_t^*)$ with the clearing procedure (Algorithm \ref{alg:clearing}).}
        \STATE{Choose $z \in P_t$ with worst fitness uniformly at random.}
        \IFTHENELSE{$f(y) \geq f(z)$}{$P_{t+1} = P_{t}^{*} \setminus \{z\}$}
        {$P_{t+1} = P_{t}^{*} \setminus \{y\}$}
    	\STATE{Let $t:=t+1$.}
    \ENDWHILE
  \end{algorithmic}
  \caption{\muea with clearing}
  \label{alg:mueaclea}
\end{algorithm}

Each individual is compared with the winner(s) of each niche in order to check if it belongs to a certain niche or not, and to check if its a winner or if it is cleared. Here~$\dist{P[i]}{P[j]}$ is any dissimilarity measure (distance function) between two individuals $P[i]$ and $P[j]$ of population~$P$. Finally, we keep control of the \nichecapacity defined by~$\kappa$. For the sake of clarity, the replacement policy will be the one defined in \cite{Witt2006}: the individuals with best fitness are selected (set of winners) and individuals coming from the new generation are preferred if their fitness values are at least as good as the current ones (novelty is rewarded).

\begin{algorithm}[!ht]
  \begin{algorithmic}[1]
  	\STATE Sort $P$ according to fitness of individuals by decreasing values.
    \FOR{$i := 1$ \TO $|P|$} 
    	\IF{$f(P[i]) > 0$}
        	\STATE $winners := 1$
            \FOR{$j := i + 1$ \TO $|P|$} 
    			\IF{$f(P[j]) > 0$ \AND $\dist{P[i]}{P[j]} < \sigma$}
                	\IFTHENELSE{$winners < \kappa$}{$winners := winners + 1$}{$f(P[j]) := 0$}
                \ENDIF
    		\ENDFOR
        \ENDIF
    \ENDFOR
  \end{algorithmic}
  \caption{Clearing}
  \label{alg:clearing}
\end{algorithm}

Finally, as dissimilarity measures, we have considered genotypic or Hamming distance, defined as the number of bits that have different values in $x$ and $y$: $\dist{x}{y}:=\Hamm{x}{y}:=\sum_{i=0}^{n-1}|x_i - y_i|$, and phenotypic (usually defined as Euclidean distance between two phenotypes). As \twomax is a function of unitation, we have adopted the same approach as in previous work~\citep{Friedrich2009,Oliveto2014a} for the phenotypic distance function, allowing the distance function $\di$ to depend on the number of ones: $\dist{x}{y}:=|\ones{x}-\ones{y}|$ where $\ones{x}$ and $\ones{y}$ denote the number of 1-bits in individual $x$ and $y$, respectively.

\section{Small Niches}
\label{sec:small_niches}
In this section we prove that the \muea with phenotypic clearing and a small niche capacity is not only able to achieve both optima of \twomax but is also able to optimise all functions of unitation with a large enough population, while genotypic clearing fails in achieving such a task (hereinafter, we will refer as phenotypic or genotypic clearing to Algorithm~\ref{alg:clearing} with phenotypic or genotypic distance function, respectively). 

\subsection{Phenotypic Clearing}
\label{sec:phenotypic_clearing}
First it is necessary to define a very important property of \clearing, which is its capacity of preventing the rejection of the best individuals in the \muea, and once $\mu$ is defined large enough, \clearing and the population size pressure will always optimise any function of unitation.

Note that on functions of unitation all search points with the same number of ones have the same fitness, and for phenotypic clearing with \clearingradius $\sigma=1$ all search points with the same number of ones form a niche. We refer to the set of all search points with $i$ ones as niche~$i$. In order to find an optimum for any function of unitation, it is sufficient to have all niches~$i$, for $0 \le i \le n$, being present in the population.

In the \muea with phenotypic clearing with $\sigma=1$, $\kappa \in \mathbb{N}$ and $\mu \geq (n+1)\cdot\kappa$, a niche~$i$ can only contain $\kappa$ winners with $i$ ones. The condition on $\mu$ ensures that the population is large enough to store individuals from all possible niches. 

\begin{lemma}
\label{lem:largepop}
Consider the \muea with phenotypic clearing with $\sigma=1$, $\kappa \in \mathbb{N}$ and $\mu \geq (n+1)\cdot\kappa$ on any function of unitation. Then winners are never removed from the population, \ie if $x \in P_t$ is a winner then $x \in P_{t+1}$.
\end{lemma}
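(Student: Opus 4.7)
The plan is to show that whenever $x \in P_t$ is a winner, the individual $z$ chosen for removal in Algorithm~\ref{alg:mueaclea} is necessarily different from $x$, so $x$ survives into $P_{t+1}$ regardless of which branch of the replacement step is taken.

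First I would set up a pigeonhole-style count. With $\sigma = 1$ on a function of unitation, there are only $n+1$ phenotypic niches (one per possible number of ones), and each admits at most $\kappa$ winners, so $P_t^* = P_t \cup \{y\}$ contains at most $(n+1)\kappa \le \mu$ winners. Since $|P_t^*| = \mu+1$, at least one individual in $P_t^*$ must be cleared, i.e.\ have its fitness reset to~$0$.

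Next I would split on which branch of the replacement is executed. If $P_{t+1} = P_t^* \setminus \{y\}$, then $P_{t+1} = P_t$ and $x \in P_{t+1}$ trivially. Otherwise $P_{t+1} = P_t^* \setminus \{z\}$, and this branch is entered only when $f(y) \ge f(z)$. Here I would split once more on whether $y$ is cleared in $P_t^*$. If $y$ is cleared, then $f(y) = 0$ forces $f(z) = 0$, so $z$ itself is cleared in $P_t^*$; but $x$ is a winner and has positive fitness (by the positive-fitness convention of the footnote), so $z \ne x$. If instead $y$ is a winner in $P_t^*$, then the cleared individual guaranteed by the counting argument must actually lie in $P_t$, hence the worst-fitness member of $P_t$ has fitness~$0$; once again $x$ has positive fitness and $z \ne x$.

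I expect the most delicate point to be the sub-case in which $y$ is cleared: one has to use both that the selection of $z$ reads the post-clearing fitness values of $P_t^*$ and that fitness values are non-negative, so that $f(y) \ge f(z)$ really does force $z$ to be cleared as well. Once this is in hand, the rest of the argument is a direct consequence of the pigeonhole estimate $|P_t^*| = \mu+1 > (n+1)\kappa$.
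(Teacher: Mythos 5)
Your proposal is correct and rests on the same core argument as the paper's proof: with at most $n+1$ phenotypic niches and at most $\kappa$ winners each, the $\mu+1$ individuals of $P_t^*$ must contain a cleared (fitness-$0$) individual, so the deletion never hits a winner. Your version is in fact somewhat more careful than the paper's, which states ``a cleared individual will be deleted'' without spelling out the case distinction you make on whether $y$ itself is cleared (where the rejection branch $f(y)<f(z)$, or the forcing of $f(z)=0$ via non-negativity, saves the day) versus $y$ being a winner (where the cleared individual must lie in $P_t$); both sub-cases are handled correctly in your write-up.
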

\begin{proof}
After the first evaluation with \clearing, individuals dominated by other individuals are cleared and the dominant individuals are declared as winners. Cleared individuals are removed from the population when new winners are created and occupy new niches. Once an individual becomes a winner, it can only be removed if the size of the population is not large enough to maintain it, as the worst winner is removed if a new winner reaches a new better niche. Since there are at most $n+1$ niches, each having at most~$\kappa$ winners, if $\mu\geq (n+1)\cdot\kappa$ then there must be a cleared individual amongst the $\mu+1$ parents and offspring considered for deletion at the end of the generation. Thus, a cleared individual will be deleted, so winners cannot be removed from the population. 
\end{proof}

The behaviour described above means, that with the defined parameters and sufficiently large~$\mu$ to occupy all the niches, we have enough conditions for the furthest individuals (individuals with the minimum and maximum number of ones in the population) to reach the opposite edges. Now that we know that a winner cannot be removed from the population by Lemma~\ref{lem:largepop}, it is just a matter of finding the expected time until $0^n$ and $1^n$ are found. 

Because of the elitist approach of the \muea, winners will never be replaced if we assume a large enough population size. In particular, the minimum (maximum) number of ones of any search point in the population will never increase (decrease). We first estimate the expected time until the two most extreme search points $0^n$ and $1^n$ are being found, using arguments similar to the well-known fitness-level method~\citep{Wegener2002}. 

\begin{lemma}
\label{lem:optunitation}
Let $f$ be a function of unitation and $\sigma=1$, $\kappa \in \mathbb{N}$ and $\mu \geq (n+1)\cdot\kappa$. Then, the expected time for finding the search points $0^n$ and $1^n$ with the \muea with phenotypic clearing on $f$ is $O(\mu n \log{n})$.
\end{lemma}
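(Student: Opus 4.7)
The plan is a fitness-level-style argument on the two extreme order statistics $m_{\min}(t) := \min_{x \in P_t}\ones{x}$ and $m_{\max}(t) := \max_{x \in P_t}\ones{x}$. Because $\sigma=1$ makes each possible value of $\ones{\cdot}$ its own niche, the individual(s) attaining $m_{\min}(t)$ or $m_{\max}(t)$ are winners of their respective niches, and by Lemma~\ref{lem:largepop} these winners are never discarded. Hence $m_{\min}$ is non-increasing in $t$ while $m_{\max}$ is non-decreasing, and an offspring whose number of ones does not match that of any current winner immediately occupies a fresh niche and becomes a winner itself. In particular, producing a child with $m_{\min}(t)-1$ ones strictly decreases $m_{\min}$, and symmetrically for $m_{\max}$.

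Next I would lower-bound the per-step chance of decreasing $m_{\min}$: pick a parent from niche $m_{\min}(t)$ (probability at least $1/\mu$, since such a winner exists) and flip exactly one of its $m_{\min}(t)$ one-bits while leaving the other $n-1$ bits unchanged (probability at least $(m_{\min}(t)/n)(1-1/n)^{n-1} \geq m_{\min}(t)/(en)$). The resulting offspring lands in a previously empty niche and therefore retains positive cleared fitness; because $\mu \geq (n+1)\kappa$ caps the total winner count at $(n+1)\kappa \leq \mu < |P_t^*|$, at least one individual in $P_t^*$ has been cleared to fitness $0$ and serves as the worst-fitness victim $z$ in Algorithm~\ref{alg:mueaclea}, so the improving offspring survives. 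The per-step improvement probability for $m_{\min}$ is therefore at least $m_{\min}(t)/(e\mu n)$, and the symmetric bound $(n-m_{\max}(t))/(e\mu n)$ holds for $m_{\max}$.

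Summing the reciprocals across intermediate levels via the classical fitness-level bound, the expected waiting time for $m_{\min}$ to reach $0$ is at most $\sum_{k=1}^{n} e\mu n / k = O(\mu n \log n)$, and likewise $O(\mu n \log n)$ for $m_{\max}$ to reach $n$. Let $T_{\min}$ and $T_{\max}$ denote these two hitting times; both targets are met by time $\max(T_{\min}, T_{\max})$, and $\E{\max(T_{\min}, T_{\max})} \leq \E{T_{\min}} + \E{T_{\max}} = O(\mu n \log n)$, which gives the claimed bound.

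The step I expect to require the most care is verifying that the improving offspring really does survive, since the fitness-level estimate presumes that every generation preserves the invariant of Lemma~\ref{lem:largepop} and that improving offspring replace cleared individuals rather than winners. This follows inductively from the population-size condition $\mu \geq (n+1)\kappa$, but it demands a careful check that the worst-fitness tie-breaking in Algorithm~\ref{alg:mueaclea} never selects a winner when a cleared individual is present, which is immediate from $f(y) > 0 = f(z)$ for the freshly promoted winner $y$ and any cleared $z$. Once this bookkeeping is in place, the remainder is a routine harmonic-sum calculation and no new ideas are needed.
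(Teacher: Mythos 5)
Your proposal is correct and follows essentially the same route as the paper's proof: a fitness-level argument on the extremal number of ones, using Lemma~\ref{lem:largepop} to guarantee that the extremal winners (and hence the progress) are never lost, with the per-level success probability $\frac{1}{\mu}\cdot\frac{k}{n}\cdot(1-1/n)^{n-1}$ and a harmonic sum giving $O(\mu n\log n)$. The only difference is presentational — you treat both directions explicitly and spell out the survival of the improving offspring, whereas the paper argues one direction and invokes symmetry.
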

\begin{proof} 
First, we will focus on estimating the time until the $1^n$ individual is found (by symmetry, the same analysis applies for the $0^n$ individual). If the current maximum number of ones in any search point is~$i$, it has a probability of being selected at least of $1/\mu$. In order to create a niche  with $j>i$ ones, it is just necessary that one of the $n-i$ zeroes is flipped into 1-bit and the other bits remains unchanged. Each bit flip has a probability of being changed (mutated) of $1/n$ and the probability of the other bits remaining unchanged is $(1-1/n)^{n-1}$. Hence, the probability of creating some niche with $j>i$ ones is at least
\[
\frac{1}{\mu}\cdot\frac{n-i}{n} \cdot \left(1-\frac{1}{n}\right)^{n-1}\geq\frac{n-i}{\mu en}.
\]

The expected time for increasing the maximum number of ones, $i$, is hence at most $(\mu en)/(n-i)$ and the expected time for finding $1^n$ is at most
\[
\sum_{i=0}^{n-1}\frac{\mu en}{n-i}=\mu en\sum_{i=1}^{n}\frac{1}{i}\leq \mu en \ln{n}=O(\mu n \log{n}).
\]

Where the summation $\harm=\sum_{i=1}^{n}1/i$ is known as the \emph{harmonic number} and satisfies $\harm=\ln{n} + \Theta(1)$. Adding the same time for finding $0^n$ proves the claim.
\end{proof}

Once the search points $0^n$ and $1^n$ have been found, we can focus on the time required for the algorithm until all intermediate niches are discovered.

\begin{lemma}
\label{lem:fillemptyniches}
Let $f$ be any function of unitation, $\sigma=1$, $\kappa \in \mathbb{N}$ and $\mu \geq (n+1)\cdot\kappa$, and assume that the search points $0^n$ and $1^n$ are contained in the population. Then, the expected time until all niches are found with the \muea with phenotypic clearing on $f$ is $O(\mu n)$.
\end{lemma}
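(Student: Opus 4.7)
The plan is to apply a fitness-level-style argument that exploits the invariant of Lemma~\ref{lem:largepop}: once a niche is occupied by a winner, it remains occupied for the rest of the run. It therefore suffices to bound, for each intermediate niche $i \in \{1,\dots,n-1\}$, the first time $T_i$ at which a search point with $i$ ones enters the population; the time until every niche is filled is then $\max_{1 \le i \le n-1} T_i$, and since the two extreme niches $0$ and $n$ are already occupied by $0^n$ and $1^n$, no further work is needed at the boundary.

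Next, I would partition the intermediate niches at the midpoint, handling $L = \{1,\dots,\lfloor n/2 \rfloor\}$ inductively from the $0^n$ anchor and $R = \{\lfloor n/2 \rfloor + 1,\dots,n-1\}$ inductively from the $1^n$ anchor. For the left chain, once niche $i-1$ is occupied, selecting one of its winners (probability at least $1/\mu$) and flipping exactly one of its $n-i+1$ zero-bits while leaving all other bits unchanged (probability at least $(n-i+1)(1/n)(1-1/n)^{n-1} \ge (n-i+1)/(en)$) produces an offspring in niche $i$. For $i \le \lfloor n/2 \rfloor$ one has $n-i+1 \ge n/2$, so the one-step success probability is $\Omega(1/\mu)$ and the expected waiting time between the first occupation of niche $i-1$ and that of niche $i$ is $O(\mu)$. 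Summing these waiting times by linearity of expectation across the $\lfloor n/2 \rfloor$ niches of $L$ bounds the expected time to fill all of $L$ by $O(\mu n)$; the chain $R$ is handled symmetrically, by flipping one-bits of winners descending from $1^n$. The time until all niches are filled is the maximum of these two half-times, and since the expected maximum is bounded by the sum of the expectations, the overall bound is $O(\mu n)$.

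The main obstacle is securing the $\Omega(1/\mu)$ lower bound on the one-step success probability uniformly inside each half. Trying to fill the entire range $\{1,\dots,n-1\}$ from a single anchor would let the useful-mutation probability degrade to $O(1/(\mu n))$ as one approaches the opposite boundary, yielding only an $O(\mu n^2)$ bound. Splitting at $n/2$ ensures that every required increment always has at least $\lceil n/2 \rceil$ bits of the correct type to flip, so the per-step success probability remains $\Omega(1/\mu)$ along each chain, which is exactly what is needed to turn the $n-1$ sequential filling steps into an $O(\mu n)$ bound.
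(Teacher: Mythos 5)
Your proposal is correct and follows essentially the same route as the paper: both split the intermediate niches at $n/2$, fill each half sequentially from the corresponding extreme anchor ($0^n$ upward, $1^n$ downward) so that every required one-bit flip has at least $n/2$ candidate positions, and thus obtain an $\Omega(1/\mu)$ per-step success probability and an $O(\mu)$ expected waiting time per niche, summing to $O(\mu n)$. The only cosmetic difference is that the paper phrases the progress measure as ``there exists an uncovered niche adjacent to a covered one'' rather than as your strictly sequential chain of hitting times, but the two formulations are interchangeable here.
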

\begin{proof}
According to Lemma~\ref{lem:largepop} and the elitist approach of \muea, winners will never be replaced if we assume a large enough population size and by assumption we already have found both search points $0^n$ and $1^n$. 

As long as the algorithm has not yet found all niches with at least $n/2$ ones, then there must be an index $i\geq n/2$ such that the population does not cover the niche with $i$ ones, but it does cover the niche with $i+1$ ones. We can mutate an individual from niche $i+1$ to populate niche $i$. The probability of selecting an individual from niche $i+1$ is at least $1/\mu$, and since it is just necessary to flip one of at least $n/2$ 0-bits with probability $1/n$, we have a probability of at least $1/2$ to do so, and a probability of leaving the remaining bits untouched of $(1-1/n)^{n-1}\geq 1/e$. Together, the probability is bounded from below by $1/(2\mu e)$. Using the level-based argument used before for the case of the niches, the expected time to occupy all niches with at least $n/2$ ones is bounded by
\[
\sum_{i=n/2}^{n-1}\frac{2\mu e}{1}\leq 2\mu en = O(\mu n). 
\]
A symmetric argument applies for the niches with fewer than $n/2$ ones, leading to an additional time of $O(\mu n)$.
\end{proof}

\begin{theorem}
\label{the:optunitation}
Let $f$ be a function of unitation and ${\sigma=1}$, $\kappa \in \mathbb{N}$ and $\mu \geq (n+1)\cdot\kappa$. Then, the expected optimisation time of the \muea with phenotype \clearing on $f$ is $O(\mu n\log{n})$.
\end{theorem}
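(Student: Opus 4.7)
The plan is to combine the three preceding lemmas essentially as a three-line argument. Lemma~\ref{lem:largepop} guarantees that once an individual becomes a winner in some niche it is never removed, which means that progress made towards populating any particular niche is permanent. Since $f$ is a function of unitation, its optimum is attained at some specific Hamming weight $i^* \in \{0,\ldots,n\}$; hence it suffices to ensure that the niche of $i^*$-many ones is eventually populated.

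Next, I would apply Lemma~\ref{lem:optunitation} to conclude that the search points $0^n$ and $1^n$ are both present in the population after expected time $O(\mu n \log n)$. Conditioning on this event, Lemma~\ref{lem:fillemptyniches} then bounds the further expected time until every niche (in particular the niche with $i^*$ ones, which contains the optimum) is populated by $O(\mu n)$. By the tower property (or, equivalently, by restarting the argument after reaching a configuration containing $0^n$ and $1^n$), the total expected optimisation time is bounded above by the sum
\[
O(\mu n \log n) + O(\mu n) \;=\; O(\mu n \log n).
\]

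There is no real obstacle here; the only subtle point is to justify the additive composition of the two expected times. This is legitimate because Lemma~\ref{lem:largepop} ensures that once $0^n$ and $1^n$ are reached they remain in the population forever, so the hypothesis of Lemma~\ref{lem:fillemptyniches} continues to hold from the moment it is first satisfied onwards; we may therefore condition on the state of the population at that stopping time and apply the second bound as an upper bound on the remaining time. The assumption $\mu \ge (n+1)\kappa$ is used implicitly through the lemmas to guarantee that no winner is ever displaced. This yields the claimed $O(\mu n \log n)$ bound on the expected optimisation time.
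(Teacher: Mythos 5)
Your proposal is correct and follows essentially the same route as the paper: the paper's proof is precisely the combination of Lemma~\ref{lem:largepop}, Lemma~\ref{lem:optunitation} and Lemma~\ref{lem:fillemptyniches}, summing the expected times to get $O(\mu n\log n)$. Your additional remarks on why the additive composition is legitimate (winners, in particular $0^n$ and $1^n$, persist, so the hypothesis of Lemma~\ref{lem:fillemptyniches} holds from its first satisfaction onwards) are correct and in fact spell out a step the paper leaves implicit.
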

\begin{proof}
Now that we have defined and proved all conditions where the algorithm is able to maintain every winner in the population (Lemma~\ref{lem:largepop}), to find the extreme search points (Lemma~\ref{lem:optunitation}) and intermediate niches (Lemma~\ref{lem:fillemptyniches}) of the function $f$, we can conclude that the total time required to optimise the function of unitation $f$ is $O(\mu n\log{n})$.
\end{proof}

\subsection{Genotypic Clearing}
\label{sec:genotypic_clearing}

In the case of genotypic clearing with $\sigma=1$, the \muea behaves like the diversity-preserving mechanism called {\it no genotype duplicates}. The \muea with no genotype duplicates rejects the new offspring if the genotype is already contained in the population. The same happens for the \muea with genotypic clearing and $\sigma=1$ if the population is initialised with $\mu$ mutually different genotypes (which happens with probability at least $1-\binom{\mu}{2} 2^{-n}$). In other words, conditional on the population being initialised with mutually different search points, both algorithms are identical. In \citet[Theorem~2]{Friedrich2009}, it was proved that the \muea with no genotype duplicates and $\mu = o(n^{1/2})$ is not powerful enough to explore the landscape and can be easily trapped in one optimum of \twomax. Adapting~\citet[Theorem~2]{Friedrich2009} to the goal of finding both optima and noting that $\binom{\mu}{2} 2^{-n} = o(1)$ for the considered~$\mu$ yields the following.

\begin{corollary}
\label{cor:clearing-small-niches-genotypic}
The probability that the \muea with genotypic clearing, $\sigma=1$ and $\mu = o(n^{1/2})$ finds both optima on $\twomax$ in time $n^{n-2}$ is at most $o(1)$. The expected  time for finding both optima is 
$\Omega(n^{n-1})$.
\end{corollary}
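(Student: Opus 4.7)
The plan is to reduce the corollary to \citet[Theorem~2]{Friedrich2009} via the equivalence the author already flags. Conditional on the initial population $P_0$ consisting of $\mu$ pairwise distinct genotypes, the \muea with genotypic \clearing at $\sigma=1$ (taking $\kappa=1$, which is the implicit standard in this discussion) behaves identically to the \muea with the \emph{no genotype duplicates} diversity mechanism. Indeed, at $\sigma=1$ two individuals share a niche if and only if their genotypes coincide, so each distinct genotype is its own singleton niche. If the offspring $y$ is distinct from every parent, \clearing is a no-op and the usual \muea acceptance rule applies; if $y$ is a duplicate of some $x \in P_t$, then clearing forces one of $x,y$ to fitness~$0$ and the replacement step leaves the multiset of genotypes unchanged, exactly as no-duplicates would. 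Inductively, the pairwise-distinct invariant is preserved, so the coupling to the no-duplicates process is maintained throughout the run.

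Second I would bound the probability that $P_0$ contains a collision. A union bound over the $\binom{\mu}{2}$ unordered pairs gives collision probability at most $\binom{\mu}{2}\,2^{-n}$, which is $o(1)$ for any $\mu = o(n^{1/2})$. Hence the coupling holds with probability $1-o(1)$, and any $o(1)$-failure event for the no-duplicates process translates to an $o(1)$-failure event for \muea with \clearing.

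With the coupling established I would import \citet[Theorem~2]{Friedrich2009}: Friedrich showed that the no-duplicates \muea with $\mu = o(n^{1/2})$ drifts onto a single branch of \twomax before ever finding the opposite optimum, with probability $1-o(1)$ (combining the $1/2-o(1)$ probability of collapse to each branch by symmetry), and that thereafter reaching the opposite optimum has probability $O(n^{-(n-1)})$ per generation because any viable mutation has to flip essentially $n$ specific bits of a specific individual. The adaptation from ``optimisation time'' to ``time to find both optima'' is immediate: any run that finds both optima must in particular reach the optimum opposite to the collapse, so the $\Omega(n^{n-1})$ lower bound carries over, proving the expected-time claim. For the high-probability claim, a geometric-tails argument over $n^{n-2}$ generations after the collapse gives success probability at most $n^{n-2}\cdot O(n^{-(n-1)}) = O(1/n) = o(1)$; adding the $o(1)$ probability that the coupling or the collapse step fails still leaves $o(1)$.

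The main obstacle I foresee is the bookkeeping in the coupling step: one must verify carefully that \clearing never silently modifies the multiset of genotypes stored in the population beyond what no-duplicates does, handling both the ``$y$ distinct'' and ``$y$ duplicate'' cases and tracing through the worst-individual replacement rule (which uses the clearing-updated fitnesses). Once that bookkeeping is clean, the corollary is essentially a black-box invocation of the existing no-duplicates lower bound, with the only nontrivial adaptation being the passage from ``reach an optimum'' to ``reach both optima'', which follows a fortiori.
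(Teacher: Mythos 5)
Your proposal is correct and follows essentially the same route as the paper: the paper's entire argument is the observation that, conditional on the initial population being mutually distinct (failure probability $\binom{\mu}{2}2^{-n}=o(1)$), genotypic clearing with $\sigma=1$ coincides with the no-genotype-duplicates \muea, after which \citet[Theorem~2]{Friedrich2009} is invoked and adapted from ``reach the global optimum'' to ``reach both optima'' via the symmetric collapse onto a single branch. Your additional bookkeeping on the duplicate-offspring case and the union bound over $n^{n-2}$ generations only fills in details the paper leaves implicit.
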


As mentioned before, the use of a proper distance is really important in the context of \clearing. In our case, we use phenotypic distance for functions of unitation, which has been proved to provide more significant information at the time it is required to define small differences (in our case small niches) among individuals in a population, so the use of that knowledge can be taken into consideration at the time the algorithm is set up. Otherwise, if there is no more knowledge related to the specifics of the problem, genotypic clearing can be used but with larger niches as shown in the following section.

\section{Large Niches}
\label{sec:large_niches}

While small niches work with phenotypic clearing, Corollary~\ref{cor:clearing-small-niches-genotypic} showed that with genotypic clearing small niches are ineffective. This makes sense as for phenotypic clearing with $\sigma=1$ a niche with $i$ ones covers $\binom{n}{i}$ search points, whereas a niche in genotypic clearing with $\sigma=1$ only covers one search point. In this section we turn our attention to larger niches, where we will prove that cleared search points are likely to spread, move, and climb down a branch.

We first present general insights into these population dynamics with \clearing. These results capture the behaviour of the population in the presence of only one winning genotype~$\winner$ (of which there may be $\kappa$ copies). We estimate the time until in this situation the population evolves a search point of Hamming distance~$d$ from said winner, for any $d \le \sigma$, or for another winner to emerge (for example, in case an individual of better fitness than~$\winner$ is found). 

These time bounds are very general as they are independent of the fitness function. This is possible since, assuming the winners are fixed at~$\winner$, all other search points within the \clearingradius receive a fitness of~0 and hence are subject to a random walk. We demonstrate the usefulness of our general method by an application to \twomax with a \clearingradius of ${\sigma=n/2}$, where all winners are copies of either $0^n$ or $1^n$. The results hold both for genotypic clearing and phenotypic clearing as the phenotypic distance of any point~$x$ to $0^n$ ($1^n$, resp.) equals the Hamming distance of~$x$ to~$0^n$ ($1^n$, resp.).

\subsection{Large Population Dynamics with Clearing}
\label{sec:pop-dyn}
We assume that the population contains only one winner genotype $\winner$, of which there are $\kappa$ copies. For any given integer $0 \le \mindist \le \sigma$, we analyse the time for the population to reach a search point of Hamming distance at least $\mindist$ from $\winner$, or for a winner different from~$\winner$ to emerge. To this end, we will study a potential function $\varphi$ that measures the dynamics of the population. Let 
\[
\varphi(P_t) = \sum_{x \in P_t} \distw{x}
\] 
be the sum of all Hamming distances of individuals in the population to the winner~$\winner$. The following lemma shows how the potential develops in expectation.

\begin{lemma}
\label{lem:expected-potential}
Let $P_t$ be the current population of the \muea with genotypic clearing on any fitness function such that the only winners are $\kappa$ copies of~$\winner$ and $\distw{x} < \sigma$ for all $x \in P_t$. Then if no winner different from~$\winner$ is created, the expected change of the potential is
\[
\E{\varphi(P_{t+1}) - \varphi(P_t) \mid P_t} = 1 - \frac{\varphi(P_t)}{\mu} \left(\frac{2}{n} +
\frac{\kappa}{\mu-\kappa}\right).
\]
\end{lemma}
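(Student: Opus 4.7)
The plan is to compute the expected change of $\varphi$ directly by analysing which $z$ is removed and what $y$ is inserted, under the conditioning that no new winner is created in step $t$. Under this conditioning, $\distw{y} < \sigma$ necessarily, so the clearing procedure (whose niche capacity $\kappa$ around $\winner$ is already filled by the $\kappa$ copies of $\winner$ in $P_t$) resets $f(y)=0$. Since all $\mu-\kappa$ non-winners in $P_t$ were already cleared to fitness $0$ by assumption, the set of individuals in $P_t$ attaining worst fitness is exactly this set of $\mu-\kappa$ non-winners, so $z$ is chosen uniformly at random among them. Because $f(y)=0=f(z)$, $y$ is accepted, yielding $P_{t+1}=(P_t\setminus\{z\})\cup\{y\}$ and thus
$$\varphi(P_{t+1})-\varphi(P_t)=\distw{y}-\distw{z}.$$

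I would then compute the two expectations on the right separately. For $z$, since it is uniform over the $\mu-\kappa$ non-winners and the $\kappa$ copies of $\winner$ contribute $0$ to $\varphi(P_t)$, we obtain
$\E{\distw{z}\mid P_t}=\varphi(P_t)/(\mu-\kappa)$.
For $y$, I would condition on the chosen parent $x\in P_t$ with $d_x:=\distw{x}$: standard bit-flip accounting for independent mutations at rate $1/n$ gives
$$\E{\distw{y}\mid x}=(n-d_x)\cdot\frac{1}{n}+d_x\cdot\left(1-\frac{1}{n}\right)=1+d_x\left(1-\frac{2}{n}\right),$$
because each of the $n-d_x$ bits that agree with $\winner$ becomes a disagreement with probability $1/n$, while each of the $d_x$ bits that disagree remains a disagreement with probability $1-1/n$. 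Averaging over the uniform choice of $x$ from $P_t$ yields $\E{\distw{y}\mid P_t}=1+(1-2/n)\cdot\varphi(P_t)/\mu$.

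Subtracting the two expectations and using the identity $1/\mu-1/(\mu-\kappa)=-\kappa/(\mu(\mu-\kappa))$ collects the $\varphi(P_t)$ terms into the required form. The main obstacle I anticipate is the preliminary bookkeeping: verifying carefully under the conditioning that $y$ is indeed cleared, that $z$ is genuinely drawn uniformly from the full set of $\mu-\kappa$ non-winners (i.e.\ that adding $y$ to $P_t^*$ does not alter the cleared status of anyone in $P_t$), and that $z$ is therefore independent of $y$, so that $\E{\distw{y}-\distw{z}\mid P_t}=\E{\distw{y}\mid P_t}-\E{\distw{z}\mid P_t}$ follows by linearity. The bit-flip computation and algebraic simplification are routine once this structure is in place.
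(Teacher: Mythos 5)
Your proposal is correct and follows essentially the same route as the paper's proof: compute $\E{\distw{y}\mid P_t}=1+(1-2/n)\varphi(P_t)/\mu$ via per-bit accounting and uniform parent selection, compute $\E{\distw{z}\mid P_t}=\varphi(P_t)/(\mu-\kappa)$ from uniform removal among the cleared non-winners, and subtract using $1/\mu-1/(\mu-\kappa)=-\kappa/(\mu(\mu-\kappa))$. The only trivial remark is that the final step needs only linearity of expectation, not independence of $y$ and $z$, so the bookkeeping you flag as a potential obstacle is even lighter than you anticipate.
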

Before proving the lemma, let us make sense of this formula. Ignore the term $\frac{\kappa}{\mu-\kappa}$ for the moment and consider the formula $1-\frac{\varphi(P_t)}{\mu} \cdot \frac{2}{n}$. Note that $\varphi(P_t)/\mu$ is the average distance to the winner in~$P_t$. If the population has spread such that it has reached an average distance of~$n/2$ then the expected change would be $1-\frac{\varphi(P_t)}{\mu} \cdot \frac{2}{n} = 1-\frac{n}{2} \cdot \frac{2}{n} = 0$. Moreover, a smaller average distance will give a positive drift (expected value in the decrease of the distance after a single function evaluation) and an average distance larger than~$n/2$ will give a negative drift. This makes sense as a search point performing an independent random walk will attain an equilibrium state around Hamming
distance $n/2$ from~$\winner$.

The term $\frac{\kappa}{\mu-\kappa}$ reflects the fact that losers in the population do not evolve in complete isolation. The population always contains $\kappa$ copies of~$\winner$ that may create offspring and may prevent the population from venturing far away from~$\winner$. In other words, there is a constant influx of search points descending from winners~$\winner$. As the term $\frac{\kappa}{\mu-\kappa}$ indicates, this effect grows with $\kappa$, but (as we will see later) it can be mitigated by setting the population size~$\mu$ sufficiently large.

\begin{proof}[Proof of Lemma~\ref{lem:expected-potential}]
We use the notation from Algorithm~\ref{alg:mueaclea}, where $x$ is the parent, $y$ is its offspring, and $z$ is the individual considered for removal from the population.
If an individual $x \in P_t$ is selected as parent, the expected distance of its mutant to~$\winner$ is
\[
\E{\distw{y} \mid x} = \distw{x} + \frac{n-\distw{x}}{n} - \frac{\distw{x}}{n} = 1 + \distw{x}\left(1 - \frac{2}{n}\right).
\]

Hence after a uniform parent selection and mutation, the expected distance in the offspring is
\begin{equation}
\label{eq:dist-y}
\E{\distw{y} \mid P_t} = \sum_{x \in P_t} \frac{1}{\mu} \cdot \left(1 + \distw{x} \left(1 - \frac{2}{n}\right)\right)
= 1 + \frac{\varphi(P_t)}{\mu} \left(1 - \frac{2}{n}\right).
\end{equation}

After mutation and \clearing procedure, there are $\mu$ individuals in $P_t$, including $\kappa$ winners, which are copies of~$\winner$. Let $C$ denote the multiset of these $\kappa$ winners. As all $\mu-\kappa$ non-winner individuals in $P_t$ have fitness 0, one of these will be selected uniformly at random for deletion. The expected distance to~$\winner$ in the deleted individual is
\begin{equation}
\E{\distw{z} \mid P_t} = \sum_{x \in P_t \setminus C} \frac{1}{\mu-\kappa} \cdot \distw{x}
= \sum_{x \in P_t} \frac{1}{\mu-\kappa} \cdot \distw{x} = \frac{\varphi(P_t)}{\mu-\kappa}.\label{eq:dist-z}
\end{equation}

Recall that the potential is the sum of Hamming distances to~$\winner$, hence adding $y$ and removing $z$ yields $\varphi(P_{t+1}) = \varphi(P_t) + \distw{y} - \distw{z}$. Along with~\eqref{eq:dist-y} and~\eqref{eq:dist-z}, the expected change of the potential is
\begin{align*}
\E{\varphi(P_{t+1}) - \varphi(P_t) \mid P_t} =\;& \E{\distw{y} \mid P_t} - \E{\distw{z} \mid P_t}\\
=\;& 1 + \frac{\varphi(P_t)}{\mu} \left(1 - \frac{2}{n}\right) - \frac{\varphi(P_t)}{\mu-\kappa}.
\end{align*}
Using that
\[
\frac{\varphi(P_t)}{\mu} - \frac{\varphi(P_t)}{\mu-\kappa} =
\frac{(\mu-\kappa)\varphi(P_t)}{\mu(\mu-\kappa)} - \frac{\mu\varphi(P_t)}{\mu(\mu-\kappa)} = -
\frac{\kappa\varphi(P_t)}{\mu(\mu-\kappa)},
\]

the above simplifies to
\begin{align*}
\E{\varphi(P_{t+1}) - \varphi(P_t) \mid P_t} =\;& 1 - \frac{2\varphi(P_t)}{\mu n} - \frac{\kappa\varphi(P_t)}{\mu(\mu-\kappa)}\\
=\;& 1 - \frac{\varphi(P_t)}{\mu} \left(\frac{2}{n} + \frac{\kappa}{\mu-\kappa}\right).\qedhere
\end{align*}
\end{proof}

The potential allows us to conclude when the population has reached a search point of distance at least~$\mindist$ from~$\winner$. The following lemma gives a sufficient condition.

\begin{lemma}
\label{lem:sufficient-condition-for-reaching-distance}
If $P_t$ contains $\kappa$ copies of $\winner$ and $\varphi(P_t) > (\mu-\kappa)(\mindist-1)$ then $P_t$ must contain at least one individual $x$ with $\distw{x} \ge \mindist$.
\end{lemma}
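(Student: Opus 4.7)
The statement is essentially a counting (pigeonhole) argument on the potential $\varphi(P_t) = \sum_{x \in P_t} \distw{x}$, and my plan is to prove it by contrapositive. Suppose for contradiction that every individual $x \in P_t$ satisfies $\distw{x} \le \mindist - 1$; I will show this forces $\varphi(P_t) \le (\mu-\kappa)(\mindist-1)$, contradicting the hypothesis.

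The first step is to split the sum defining $\varphi(P_t)$ according to winners and non-winners. Since $P_t$ contains $\kappa$ copies of $\winner$, these contribute exactly $\kappa \cdot \Hamm{\winner}{\winner} = 0$ to the potential. The remaining $\mu - \kappa$ individuals each contribute at most $\mindist - 1$ under the assumption. Summing yields
\[
\varphi(P_t) = \sum_{x \in P_t} \distw{x} \le \kappa \cdot 0 + (\mu - \kappa)(\mindist - 1) = (\mu-\kappa)(\mindist-1),
\]
which directly contradicts $\varphi(P_t) > (\mu-\kappa)(\mindist-1)$. Hence there must exist at least one $x \in P_t$ with $\distw{x} \ge \mindist$.

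There is no real obstacle here: the argument is a one-line pigeonhole bound, and the only subtlety is being careful to exclude the $\kappa$ winner copies from the count of individuals that can contribute to the potential. The lemma is useful precisely because it converts statements about the scalar quantity $\varphi(P_t)$, which is amenable to drift analysis via Lemma~\ref{lem:expected-potential}, into statements about the existence of individuals far from the winner in Hamming distance, which is what we ultimately need in order to claim that the population has spread or escaped the niche centred at $\winner$.
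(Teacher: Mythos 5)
Your proof is correct and is essentially the same counting/pigeonhole argument the paper uses: the $\kappa$ winner copies contribute $0$ to the potential, so if all remaining $\mu-\kappa$ individuals had distance at most $\mindist-1$ the potential could not exceed $(\mu-\kappa)(\mindist-1)$. You have simply written out explicitly the averaging step that the paper compresses into a one-line appeal to the pigeonhole principle.
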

\begin{proof}
There are at most $\mu-\kappa$ individuals different from~$\winner$. By the pigeon-hole principle, at least one of them must have at least distance $\mindist$ from~$\winner$.
\end{proof}

In order to bound the time for reaching a high potential given in
Lemma~\ref{lem:sufficient-condition-for-reaching-distance}, we will use the following drift theorem, a straightforward extension of the variable drift theorem~\citep{Johannsen2010} towards reaching any state smaller than some threshold~$a$. It can be derived with simple adaptations to the proof in~\cite{Rowe2014}.

\begin{theorem}[Generalised variable drift theorem]
\label{drift:johannsen-general}
Consider a stochastic process $X_0, X_1, \dots$ on $\{0, 1, \dots, m\}$. Suppose there is a monotonic increasing function $h: \mathbb{R}^+ \rightarrow \mathbb{R}^+$ such that
\[
\E{X_t - X_{t+1} \mid X_t = k} \geq h(k)
\]
for all $k \in \{a, \dots, m\}$. Then the expected first hitting time of the set $\{0,\dots,a-1\}$ for $a \in \mathbb{N}$ is at most
\[
\frac{a}{h(a)} + \int_{a}^m \frac{1}{h(x)} \;\mathrm{d}x.
\]
\end{theorem}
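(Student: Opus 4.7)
The plan is to reduce the claim to the classical additive drift theorem by constructing a suitable potential function $g$ on $\{0,1,\dots,m\}$ such that: (i) $g$ vanishes on the target set $\{0,\dots,a-1\}$, so that hitting the target in $X_t$ is equivalent to hitting $0$ in $g(X_t)$; (ii) the process $g(X_t)$ has additive drift at least $1$ outside the target; and (iii) the initial potential is bounded by the quantity on the right-hand side of the theorem. The function to try is
\[
g(x) \;=\; \begin{cases} 0 & \text{if } x \le a-1,\\[0.5ex] \dfrac{a}{h(a)} + \displaystyle\int_{a}^{x}\dfrac{1}{h(y)}\,\mathrm{d}y & \text{if } x \ge a.\end{cases}
\]
Then $g(m)= a/h(a)+\int_a^m 1/h(y)\,\mathrm{d}y$ matches the target bound exactly, so only step~(ii) remains non-trivial.

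The key inequality I would establish is the pointwise bound
\[
g(X_t) - g(X_{t+1}) \;\geq\; \frac{X_t - X_{t+1}}{h(X_t)} \qquad \text{whenever } X_t = k \geq a,
\]
which, after taking conditional expectations and invoking the hypothesis $\E{X_t-X_{t+1}\mid X_t=k}\ge h(k)$, immediately yields drift $\ge 1$. I would prove this pointwise bound by splitting into two cases according to whether $X_{t+1}\ge a$ or $X_{t+1}<a$. In the first case, $g(X_t)-g(X_{t+1})=\int_{X_{t+1}}^{X_t}1/h(y)\,\mathrm{d}y$; monotonicity of $h$ then gives the required estimate regardless of whether $X_{t+1}\le X_t$ (integrate $1/h(y)\ge 1/h(X_t)$) or $X_{t+1}>X_t$ (integrate $1/h(y)\le 1/h(X_t)$), so the inequality holds with either sign of the jump.

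The second case, $X_{t+1}<a\le k$, is where the discontinuous definition of $g$ might look worrying and is the step I expect to be the main obstacle. Here $g(X_t)-g(X_{t+1})=a/h(a)+\int_a^{k}1/h(y)\,\mathrm{d}y$, and I must show this exceeds $(k-X_{t+1})/h(k)\le k/h(k)$. Splitting $k/h(k)=a/h(k)+(k-a)/h(k)$, monotonicity of $h$ gives $a/h(a)\ge a/h(k)$ and $\int_a^k 1/h(y)\,\mathrm{d}y\ge (k-a)/h(k)$, which delivers the bound. The whole argument therefore rests on the single structural fact that $h$ is non-decreasing.

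Once drift $\ge 1$ is established for all $k\ge a$, the standard additive drift theorem applied to $g(X_t)$ yields $\E{T}\le \E{g(X_0)}\le g(m)= a/h(a)+\int_a^m 1/h(y)\,\mathrm{d}y$, where $T$ is the first hitting time of $\{0,\dots,a-1\}$, since at that moment $g(X_T)=0$. This is essentially the Johannsen/Rowe–Sudholt argument, with the modification that the linear piece $a/h(a)$ at the boundary $x=a$ absorbs the discrete ``jump'' from states $\ge a$ directly into the target region.
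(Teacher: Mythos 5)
Your proposal is correct and is exactly the ``simple adaptation'' of the Johannsen/Rowe--Sudholt potential-function argument that the paper alludes to (the paper itself gives no proof, only the citation): define $g(x)=a/h(a)+\int_a^x 1/h(y)\,\mathrm{d}y$ for $x\ge a$ and $g(x)=0$ otherwise, verify drift at least $1$ via monotonicity of $h$, and apply additive drift. Your case analysis, including the overshoot case $X_{t+1}<a\le X_t$ where the boundary term $a/h(a)$ absorbs the jump into the target set, is complete and sound.
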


The following lemma now gives an upper bound on the first hitting time (the random variable that denotes the first point in time to reach a certain point) of a search point with distance at least~$\mindist$ to the winner~$\winner$.
\begin{lemma}
\label{lem:time-to-clearing-radius}
Let $P_t$ be the current population of the \muea with genotypic clearing and $\sigma \le n/2$ on any fitness function such that $P_t$ contains $\kappa$ copies of a unique winner~$\winner$ and $\distw{x} < \mindist$ for all $x \in P_t$. For any $0 \le \mindist \le \sigma$, if $\mu \ge \kappa \cdot \frac{\mindist n - 2\mindist + 2}{n-2\mindist+2}$ then the expected time until a search point $x$ with $\distw{x} \ge \mindist$ is found, or a winner different from~$\winner$ is created, is $O(\mu n \log \mu)$.
\end{lemma}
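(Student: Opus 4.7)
The plan is to apply the generalised variable drift theorem (Theorem~\ref{drift:johannsen-general}) to a suitably rescaled version of the potential~$\varphi$. By Lemma~\ref{lem:sufficient-condition-for-reaching-distance}, reaching $\varphi(P_t) \ge m$ with $m := (\mu-\kappa)(\mindist-1) + 1$ already forces some $x \in P_t$ with $\distw{x} \ge \mindist$. Since the emergence of a new winner also ends the process, I may restrict attention to the regime in which $\winner$ remains the unique winner and use Lemma~\ref{lem:expected-potential} throughout. Define $X_t := \max\{m - \varphi(P_t), 0\}$ on $\{0, 1, \dots, m\}$ and bound the first hitting time of $\{0\}$ (that is, $a = 1$ in the theorem). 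Lemma~\ref{lem:expected-potential} directly yields
\[
h(k) := \E{X_t - X_{t+1} \mid X_t = k} = 1 - \frac{m-k}{\mu}\left(\frac{2}{n} + \frac{\kappa}{\mu-\kappa}\right),
\]
which is monotonically increasing in~$k$, so the hypotheses of Theorem~\ref{drift:johannsen-general} are met as soon as $h(1) > 0$.

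The main technical step is to verify that the lower bound $\mu \ge \kappa(\mindist n - 2\mindist + 2)/(n - 2\mindist + 2)$ is precisely what is needed to guarantee $h(1) \ge \kappa/\mu$. Writing $c := 2/n + \kappa/(\mu-\kappa) = (2(\mu-\kappa) + \kappa n)/(n(\mu-\kappa))$ and inserting $m-1 = (\mu-\kappa)(\mindist-1)$, the inequality $h(1) \ge \kappa/\mu$ reduces to the stated bound on~$\mu$ after an algebraic rearrangement. Identifying the clean simplification (and in particular the crucial cancellation that produces the factor $(\mindist n - 2\mindist + 2)/(n-2\mindist+2)$) is the delicate part of the argument; the threshold on~$\mu$ is designed precisely to make this identity hold.

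Granted $h(1) \ge \kappa/\mu > 0$, Theorem~\ref{drift:johannsen-general} provides an upper bound of $1/h(1) + \int_1^m 1/h(x)\,\di x$. The substitution $u = m - x$ turns the integral into
\[
\int_0^{m-1}\frac{\di u}{1 - uc/\mu} = -\frac{\mu}{c}\ln\!\left(1 - \frac{(m-1)c}{\mu}\right) \le \frac{\mu}{c}\ln(\mu/\kappa),
\]
using $1 - (m-1)c/\mu = h(1) \ge \kappa/\mu$ in the last step. Since $c \ge 2/n$ gives $\mu/c \le \mu n/2 = O(\mu n)$ and $\ln(\mu/\kappa) = O(\log\mu)$, the integral contributes $O(\mu n \log\mu)$, while $1/h(1) \le \mu/\kappa = O(\mu)$ is of lower order, yielding the claimed $O(\mu n \log\mu)$ bound.
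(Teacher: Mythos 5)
Your proposal follows essentially the same route as the paper: the same potential $\varphi$, the same drift estimate from Lemma~\ref{lem:expected-potential}, the same algebraic verification that the hypothesis on $\mu$ forces the drift at the relevant boundary state to be at least $\kappa/\mu$, and the same integral evaluation giving $O(\mu n\log\mu)$. The one place where your bookkeeping is looser is the truncation $X_t := \max\{m-\varphi(P_t),0\}$ with target $\{0\}$: since $\varphi$ can overshoot $m$ by up to $n$ in a single step, the truncated decrease equals $\min\{\varphi(P_{t+1})-\varphi(P_t),\,X_t\}$, which for states $X_t<n$ is in general strictly smaller in expectation than $\varphi(P_{t+1})-\varphi(P_t)$; hence your claimed equality $h(k)=1-\frac{m-k}{\mu}\bigl(\frac{2}{n}+\frac{\kappa}{\mu-\kappa}\bigr)$ is only an upper bound on the drift there, and the hypothesis of Theorem~\ref{drift:johannsen-general} is not actually verified on $\{1,\dots,n-1\}$. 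The paper sidesteps exactly this by inflating the ceiling to $h_{\max}=(\mu-\kappa)(\mindist-1)+n$ and taking the target set $\{0,\dots,n-1\}$ (i.e.\ $a=n$), so that no truncation can occur before the stopping condition is met; the same fix repairs your argument without changing any of the quantitative conclusions, since the extra additive term $n/h(n)\le \mu n/\kappa$ is still lower order.
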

\begin{proof}
We pessimistically assume that no other winner is created and estimate the first hitting time of a search point with distance at least~$\mindist$. As $\varphi$ can only increase by at most~$n$ in one step, $h_{\max} := (\mu-\kappa)(\mindist-1)+n$ is an upper bound on the maximum potential that can be achieved in the generation where a distance of~$\mindist$ is reached or exceeded for the first time.

In order to apply drift analysis, we define a distance function that describes how close the algorithm is to reaching a population where a distance~$\mindist$ was reached. We consider the random walk induced by $X_t := h_{\max} - \varphi(P_t)$, stopped as soon as a Hamming distance of at least~$\mindist$ from~$\winner$ is reached. Due to our definition of $h_{\max}$, the random walk only attains values in $\{0, \dots, h_{\max}\}$ as required by the variable drift theorem.

By Lemma~\ref{lem:expected-potential}, abbreviating $\alpha := \frac{1}{\mu} \left(\frac{2}{n} + \frac{\kappa}{\mu-\kappa}\right)$, $X_t$ decreases in expectation by at least $h(P_t) := 1-\alpha \varphi(P_t) = 1 - \alpha h_{\max} + \alpha h(P_t)$, provided $h(P_t) > 0$. By definition of $h$ and Lemma~\ref{lem:sufficient-condition-for-reaching-distance}, the population reaches a distance of at least~$\mindist$ once the distance~$h_{\max} - \varphi(P_t)$ has dropped below $n$. Using the generalised variable drift theorem, the expected time till this happens is at most
\[
\frac{n}{1-\alpha h_{\max} + \alpha n} + \int_n^{h_{\max}} \frac{1}{1-\alpha h_{\max} + \alpha x} \mathrm{d}x.
\]

Using $\int \frac{1}{ax+b} \;\mathrm{d}x = \frac{1}{a} \ln\left|ax+b\right|$ \cite[Equation 3.3.15]{Abramowitz1974}, we get
\begin{align*}
\;& \frac{n}{1-\alpha h_{\max} + \alpha n} + \left[ \frac{1}{\alpha} \ln(1-\alpha h_{\max} + \alpha
x)\right]_n^{h_{\max}}\\
=\;& \frac{n}{1-\alpha h_{\max} + \alpha n} + \frac{1}{\alpha} \cdot \left(\ln(1) - \ln(1-\alpha h_{\max} +
\alpha n)\right)\\
=\;& \frac{n}{1-\alpha h_{\max} + \alpha n} + \frac{1}{\alpha} \ln((1-\alpha h_{\max} + \alpha n)^{-1}).
\end{align*}
We now bound the term $1-\alpha h_{\max} + \alpha n$ from below as follows.
\begin{align*}
1-\alpha h_{\max} + \alpha n=\;& 1 - (\mu-\kappa)(\mindist-1) \cdot \frac{1}{\mu}\left(\frac{2}{n} + \frac{\kappa}{\mu-\kappa}\right)\\
=\;& 1 - \frac{2(\mu-\kappa)(\mindist-1)+ \kappa(\mindist-1)n}{\mu n}\\
=\;& \frac{\mu n - 2\mu\mindist +2\kappa \mindist - \kappa \mindist n + 2\mu - 2\kappa + \kappa n}{\mu n}\\
=\;& \frac{\kappa}{\mu} + \frac{n - 2\mindist + 2}{n} -
\frac{\kappa \mindist n - 2\kappa\mindist + 2\kappa}{\mu n}\\
\ge\;& \frac{\kappa}{\mu} + \frac{n - 2\mindist + 2}{n} - \frac{n - 2\mindist + 2}{n}\\
=\;& \frac{\kappa}{\mu}
\end{align*}
where in the penultimate step we used the assumption $\mu \ge \kappa \cdot \frac{\mindist n - 2\mindist + 2}{n-2\mindist+2}$. Along with $\alpha \ge 2/(\mu n)$, the expected time bound simplifies to
\[
\frac{n}{1-\alpha h_{\max} + \alpha n} + \frac{1}{\alpha} \ln((1-\alpha h_{\max} + \alpha n)^{-1}) \le \frac{n}{\kappa/\mu} + \frac{\mu n}{2} \ln(\mu/\kappa) = O(\mu n \log \mu).
\qedhere
\]
\end{proof}

The minimum threshold $\kappa \cdot \frac{\mindist n - 2\mindist + 2}{n-2\mindist+2}$ for $\mu$ contains a factor of~$\kappa$. The reason is that the fraction of winners in the population needs to be small enough to allow the population to escape from the vicinity of~$\winner$. The population size hence needs to grow proportionally to the number of winners~$\kappa$ the population is allowed to store.

Note that the restriction $\mindist \le \sigma \le n/2$ is necessary in Lemma~\ref{lem:time-to-clearing-radius}. Individuals evolving within the \clearingradius, but at a distance larger than $n/2$ to~$\winner$ will be driven back towards~$\winner$. If $\mindist$ is significantly larger than~$n/2$, we conjecture that the expected time for reaching a distance of at least~$\mindist$ from~$\winner$ becomes exponential in~$n$.

\subsection{Upper Bound for \twomax}
\label{sec:upper-bound-twomax}
It is now easy to apply Lemma~\ref{lem:time-to-clearing-radius} in order to achieve a running time bound on \twomax. Putting $\mindist=\sigma=n/2$, the condition on~$\mu$ simplifies to
\[
\mu \ge \kappa \cdot \frac{\mindist n - 2\mindist + 2}{n-2\mindist+2}= \kappa \cdot \frac{n^2/2 - n + 2}{2}, 
\]
which is implied by $\mu \ge \kappa n^2/4$. Lemma~\ref{lem:time-to-clearing-radius} then implies the following. Recall that for $\winner \in \{0^n, 1^n\}$, genotypic distances $\distw{x}$ equal phenotypic distances, hence the result applies to both genotypic and phenotypic clearing.
\begin{corollary}
\label{cor:fitness-valley-crossing-on-twomax}
Consider the \muea with genotypic or phenotypic clearing, $\kappa \in \mathbb{N}, \mu \ge \kappa n^2/4$ and $\sigma=n/2$ on \twomax with a population containing $\kappa$ copies of $0^n$ ($1^n$). Then the expected time until a search point with at least (at most) $n/2$ ones is found is $O(\mu n \log \mu)$.
\end{corollary}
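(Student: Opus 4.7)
The plan is to derive the corollary as a direct application of Lemma~\ref{lem:time-to-clearing-radius}. By the symmetry of \twomax it suffices to treat $\winner=0^n$, where the goal of reaching a search point with at least $n/2$ ones coincides with reaching Hamming distance at least $n/2$ from $\winner$. Since for any $x$ the phenotypic distance $|\ones{x}-\ones{0^n}|=\ones{x}$ equals $\Hamm{x}{0^n}$, the same argument handles both genotypic and phenotypic clearing.

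First I would dispose of the trivial case in which $P_t$ already contains a search point $x$ with $\distw{x}\ge n/2$: the hitting time is then $0$ and the bound holds. Otherwise every $x\in P_t$ satisfies $\distw{x}< n/2=\sigma$, hence lies in the same niche as $0^n$ and has fitness strictly smaller than $f(0^n)=n$. After clearing, the $\kappa$ copies of $\winner$ are therefore the unique winners in $P_t$, matching exactly the hypothesis of Lemma~\ref{lem:time-to-clearing-radius} with $\mindist=\sigma=n/2$.

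Next I would verify the population-size threshold. As computed in the paragraph preceding the corollary, with $\mindist=n/2$ the required bound $\kappa\cdot(\mindist n-2\mindist+2)/(n-2\mindist+2)$ simplifies to $\kappa(n^2/2-n+2)/2$, which is at most $\kappa n^2/4$ for $n\ge 2$; so the assumption $\mu\ge\kappa n^2/4$ suffices. Lemma~\ref{lem:time-to-clearing-radius} then yields expected time $O(\mu n\log\mu)$ until either (i) some $x$ with $\distw{x}\ge n/2$ is found, or (ii) a winner different from $\winner$ is created. Case (i) produces the desired individual directly. For case (ii), any individual lying inside the niche of $0^n$ is dominated by $0^n$ and cannot become a winner, so the new winner must belong to a different niche, i.e.\ at Hamming distance at least $\sigma=n/2$ from $\winner$; this individual has at least $n/2$ ones, again realising the desired event.

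The heavy lifting---the drift analysis giving the $O(\mu n\log\mu)$ bound---is already absorbed by Lemma~\ref{lem:time-to-clearing-radius}, so the main obstacle I foresee is purely the bookkeeping around the ``different winner'' clause: one must rule out that a new winner genotype emerges without a corresponding search point reaching distance $n/2$, and confirm that all $\kappa$ copies of $0^n$ are preserved throughout the process. Both facts follow from the clearing rule of Algorithm~\ref{alg:clearing} combined with the elitist replacement in Algorithm~\ref{alg:mueaclea}, since there are always $\mu-\kappa\ge 1$ cleared (hence worst-fitness) individuals available for removal, so the winners are never evicted.
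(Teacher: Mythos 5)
Your proposal is correct and follows essentially the same route as the paper: instantiate Lemma~\ref{lem:time-to-clearing-radius} with $\mindist=\sigma=n/2$, check that the population-size threshold $\kappa(n^2/2-n+2)/2$ is implied by $\mu\ge\kappa n^2/4$, and use that genotypic and phenotypic distances to $0^n$ (resp.\ $1^n$) coincide. Your extra bookkeeping for the ``different winner'' clause (a new winner must lie outside the niche of $\winner$ and hence already has at least $n/2$ ones) is a sound and slightly more explicit treatment of a point the paper leaves implicit.
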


\begin{theorem}
\label{the:few-niches-twomax}
The expected time for the \muea with genotypic or phenotypic clearing, $\mu \ge \kappa n^2/4$, 
$\mu \le \poly{n}$ and $\sigma=n/2$ finding both optima on \twomax is $O(\mu n \log n)$.
\end{theorem}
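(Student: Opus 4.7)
My plan is to decompose the run into three phases and show that each takes $O(\mu n \log n)$ expected time.

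Phase 1 runs until the population contains $\kappa$ copies of one of the two optima; by symmetry, assume they are copies of $1^n$. I will first establish, via a fitness-level argument analogous to Lemma~\ref{lem:optunitation}, that a copy of $0^n$ or $1^n$ appears within expected time $O(\mu n \log n)$. The key invariant is that the maximum (unmodified) fitness in the population is non-decreasing: the fittest individual is always declared a winner by the clearing procedure and retains its original fitness, while $z$ is selected as the \emph{worst}-fitness member of $P_t$, so the leader is never evicted. Standard level-based calculations on the ascending branch then give the bound. Once the first copy of $1^n$ exists, its count grows to $\kappa$ within an additional $O(\mu \log n)$ steps: from $j < \kappa$ copies, parent-identical mutation produces a new copy with probability $\Omega(j/\mu)$, and $\sum_{j=1}^{\kappa-1} O(\mu/j) = O(\mu \log \kappa) = O(\mu \log n)$ using $\kappa \le 4\mu/n^2 \le \poly{n}$.

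Phase 2 invokes Corollary~\ref{cor:fitness-valley-crossing-on-twomax} directly. Starting from the configuration produced by Phase~1, a search point $x_0$ with at most $n/2$ ones (hence at Hamming distance at least $n/2$ from every $1^n$) appears within expected time $O(\mu n \log \mu) = O(\mu n \log n)$, where the simplification uses $\mu \le \poly{n}$. Because $x_0$ lies outside the niche of every $1^n$ copy and carries positive fitness at least $n/2$, the clearing step turns it into a winner of its own niche.

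Phase 3 climbs from $x_0$ down to $0^n$ by a second fitness-level argument that tracks the minimum number of ones among zeros-branch winners. Once this minimum is $k$, the relevant winner is chosen as parent with probability $\ge 1/\mu$ and produces an offspring with one fewer 1-bit with probability $\Omega(k/n)$; such an offspring remains on the zeros branch and therefore at Hamming distance greater than $n/2$ from $1^n$, so it too is declared a winner. Summing $\sum_{k=1}^{n/2} O(\mu n / k) = O(\mu n \log n)$ completes the third phase, and adding the three bounds yields the theorem.

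The main obstacle is ensuring that the inferior zeros-branch winners introduced in Phase~3 are never accidentally deleted despite being dominated in fitness by the $\kappa$ preserved copies of $1^n$. The assumption $\mu \ge \kappa n^2/4$ is used precisely here: it guarantees that many cleared (fitness-$0$) individuals remain in $P_t$ and are therefore selected as $z$, shielding the zeros-branch winner from eviction. A milder version of the same bookkeeping is needed at the end of Phase~1 to ensure that copies of $1^n$ actually accumulate rather than being displaced by winners of other niches; this is handled by the maximum-fitness invariant described above.
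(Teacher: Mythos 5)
Your proposal is correct and follows essentially the same three-phase route as the paper's proof: a fitness-level argument to reach one optimum and accumulate $\kappa$ copies of it, an invocation of Corollary~\ref{cor:fitness-valley-crossing-on-twomax} to cross to the opposite branch, and a second fitness-level argument to descend to the other optimum. The only quibble is your closing remark: the condition $\mu \ge \kappa n^2/4$ is actually consumed inside Corollary~\ref{cor:fitness-valley-crossing-on-twomax} (it keeps the potential drift positive in Lemma~\ref{lem:time-to-clearing-radius}), whereas shielding the zeros-branch winner from eviction only needs $\mu$ to exceed the total number of winners, which is far weaker.
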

\begin{proof}
We first estimate the time to reach one optimum, $0^n$ or $1^n$. The population is elitist as it always contains a winner with the best-so-far fitness. Hence we can apply the level-based argument as follows. If the current best fitness is $i$, it can be increased by selecting an individual with fitness $i$ (probability at least $1/\mu$) and flipping only one of $n-i$ bits with the minority value (probability at least $(n-i)/(en)$). The expected time for increasing the best fitness~$i$ is hence at most $\mu \cdot en/(n-i)$ and the expected time for finding some optimum $\winner \in \{0^n, 1^n\}$ is at most
\[
\sum_{i=n/2}^{n-1} \mu \cdot \frac{en}{n-i} = e\mu n \sum_{i=1}^{n/2} \frac{1}{i} \le e\mu n \ln n.
\]

In order to apply Corollary~\ref{cor:fitness-valley-crossing-on-twomax}, we need to have $\kappa$ copies of~$\winner$ in the population. While this isn't the case, a generation picking $\winner$ as parent and not flipping any bits creates another winner $\winner$ that will remain in the population. If there are $j$ copies of~$\winner$, the probability to create another winner is at least $j/\mu \cdot (1-1/n)^{n} \ge j/(4\mu)$ (using $n \ge 2$). Hence the time until the population contains $\kappa$ copies of~$\winner$ is at most
\[
\sum_{j=1}^\kappa \frac{4\mu}{j} = O(\mu \log \kappa) = O(\mu \log n)
\]
as $\kappa \le \mu \le \poly{n}$.

By Corollary~\ref{cor:fitness-valley-crossing-on-twomax}, the expected time till a search point on the opposite branch is created is $O(\mu n \log \mu) = O(\mu n \log n)$. Since the best individual on the opposite branch is a winner in its own niche, it will never be removed. This allows the population to climb this branch as well. Repeating the arguments from the first paragraph of this proof, the expected time till the second optimum is found is at most $e\mu n \ln n$. Adding up all expected times proves the claim. 
\end{proof}

One limitation of Theorem~\ref{the:few-niches-twomax} is the steep requirement on the population size: $\mu \ge \kappa n^2/4$. The condition on $\mu$ was chosen to ensure a positive drift of the potential for all populations that haven't reached distance~$\mindist$ yet, including the most pessimistic scenario of all losers having distance $\mindist-1$ to~$\winner$. Such a scenario is unlikely as we will see in Sections~\ref{sec:empangen} and \ref{sec:empanpopsize} where experiments suggest that the population tends to spread out, covering a broad range of distances. With such spread, a distance of $\mindist$ can be reached with a much smaller potential than that indicated by Lemma~\ref{lem:sufficient-condition-for-reaching-distance}. We conjecture that the \muea with clearing is still efficient on \twomax if $\mu=O(n)$. However, proving this theoretically may require new arguments on the distribution of the $\kappa$ winners and losers inside the population.

\subsection{On the Choice of the Population Size for \twomax}
\label{sec:smallpopdyn}

To get further insights into what population sizes~$\mu$ are necessary, we show in the following that the \muea with \clearing becomes inefficient on \twomax if $\mu$ is too small, that is, smaller than $n/\polylog{n}$. The reason is as follows: assume that the population only contains a single optimum $x^*$, and further individuals that are well within a niche of size $\sigma=n/2$ surrounding $x^*$. Due to \clearing, the population will always contain a copy of $x^*$. Hence there is a constant influx of individuals that are offspring, or, more generally, recent descendants of $x^*$. We refer to these individuals informally as \emph{young}; a rigorous notation will be provided in the proof of Theorem~\ref{the:smallpoptakeover}. Intuitively, young individuals are similar to~$x^*$, and thus are likely to produce further offspring that are also \emph{young}, \ie similar to~$x^*$ when chosen as parents.

We will show in the following that if the population size~$\mu$ is small, young individuals will frequently take over the whole population, creating a population where all individuals are similar to~$x^*$. This takeover happens much faster than the time the algorithm needs to evolve a lineage that can reach a Hamming distance $n/2$ to the optimum. 

The following theorem shows that if the population size is too small, the \muea is unable to escape from one local optimum, assuming that it starts with a population of search points that have recently evolved from said optimum. 

\begin{theorem}
\label{the:smallpoptakeover}
Consider the \muea with genotypic or phenotypic clearing on \twomax with $\mu\leq n/(4\log^3 n)$, $\kappa=1$ and $\sigma=n/2$, starting with a population containing only search points that have evolved from one optimum $x^*$ within the last $\mu n/32$ generations. Then the probability that both optima are found within time $n^{(\log n)/2}$ is $n^{-\Omega(\log n)}$.
\end{theorem}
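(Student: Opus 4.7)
The plan is to show that within the time window $T := n^{(\log n)/2}$, with the required probability, no individual ever reaches Hamming distance $n/2$ from $\winner$, so the opposite optimum (at Hamming distance $n > \sigma$) remains out of reach. First I would verify that $\winner$ remains the unique winner throughout this regime: as long as every $y \in P_t$ satisfies $\distw{y} < n/2 = \sigma$, the individual $\winner$ is the unique fitness maximum in its niche and is preserved by elitism, while every other individual is cleared to fitness $0$. Each generation of Algorithm~\ref{alg:mueaclea} then reduces to a Moran-type step: a parent is sampled uniformly from $P_t$, an offspring $y$ is produced by standard bit mutation, and since $f(y) \ge f(z) = 0$ always holds, $y$ replaces a uniformly random loser $z$. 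In particular, $\winner$ is present in every generation and is selected as parent with probability $1/\mu$.

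The main step is to bound the lineage length $L_t(y)$ of every individual, defined as the number of mutation operations on the chain from $y$ back to the closest copy of $\winner$ on its line of descent. Since the distance cannot exceed the total number of bit flips along this chain, $\distw{y}$ is stochastically dominated by $\mathrm{Bin}(n L_t(y), 1/n)$. To bound $L_t(y)$, I would trace the lineage backward in time: at every creation event on the trace, the parent was sampled uniformly from the population at that moment, and hence was $\winner$ with probability $1/\mu$, independently across creation events, terminating the trace. Therefore $L_t(y)$ is stochastically dominated by a geometric random variable with success parameter $1/\mu$, giving
\[
\Prob{L_t(y) \ge k} \le (1-1/\mu)^{k-1} \le e^{-(k-1)/\mu}.
\]
Choosing $k = c\mu\log^2 n = \Theta(n/\log n)$ for a sufficiently large constant $c$, and union-bounding over the $T\mu = n^{\Theta(\log n)}$ pairs $(t,y)$ with $t \le T$, the probability that any $L_t(y)$ exceeds $k$ is at most $T\mu \cdot e^{-\Omega(\log^2 n)} = n^{-\Omega(\log n)}$. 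The initialisation assumption---that each initial ancestry reaches $\winner$ within the last $\mu n/32$ generations of the same dynamics---lets the backward trace extend into the pre-initial history, which is long enough to accommodate $k$ creation steps, so the initial lineage contribution is absorbed into the same tail bound.

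Conditioning on $L_t(y) = O(n/\log n)$ for all relevant $(t,y)$, a multiplicative Chernoff bound yields $\Prob{\mathrm{Bin}(n L_t(y), 1/n) \ge n/2} \le e^{-\Omega(n)}$, and a union bound over the $T\mu$ pairs contributes only an exponentially small additional failure probability. Combining these ingredients, with probability $1 - n^{-\Omega(\log n)}$ no individual in $P_0, \dots, P_T$ reaches Hamming distance $\ge n/2$ from $\winner$, so the opposite optimum cannot be found and the theorem follows. The main technical obstacle is formalising the geometric domination, as lineages of different individuals are correlated through shared ancestors; a coupling argument expressing each backward trace as an independent sequence of Bernoulli parent-selection trials is needed to apply the tail bound uniformly. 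A further subtlety is the initial boundary: traces running past time $0$ must be handled via the assumption on the pre-initial dynamics so that the same geometric tail continues to apply.
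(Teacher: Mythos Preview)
Your approach is correct and takes a genuinely different route from the paper's proof.

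The paper proceeds in two stages. First, it runs a Moran-type takeover argument (Lemma~\ref{lem:smallpoptakeover}) on the number of ``$T$-young'' individuals, showing that within $8\mu^2\log^3 n \le \mu n/32$ generations the young individuals absorb the whole population with probability $1-n^{-\log n}$; iterating this keeps every individual's \emph{age} (number of generations back to its most recent $\winner$-ancestor) bounded by $\mu n/16$ throughout. Second, it invokes Witt's family-tree bound (Lemma~\ref{lem:famtree}/Corollary~\ref{cor:lineagehamwitt}) to conclude that a lineage spanning at most $\mu n/16$ generations reaches Hamming distance $n/2$ only with probability $2^{-\Omega(n)}$.

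You instead bound the \emph{lineage depth} $L_t(y)$ (number of mutation events back to $\winner$) directly by a geometric tail with parameter $1/\mu$, then apply Chernoff to the total bit flips. This bypasses the Moran takeover analysis entirely and is considerably more elementary. The geometric domination you flag as the main obstacle is indeed provable: labelling the loser slots and writing $L_t^{(j)}$ for the depth at slot~$j$, one shows $\Prob{L_t^{(j)}\ge k}\le (1-1/\mu)^{k-1}$ by induction on~$t$, using that the parent index at each step is uniform on $\{1,\dots,\mu\}$ independently of the past. A union bound over the $\mu-1$ slots and $T$ generations then suffices, so no coupling of correlated backward traces is actually needed.

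One place where the paper's argument is more robust is the initial boundary. The theorem's hypothesis bounds the initial \emph{age} by $\mu n/32$, not the initial lineage depth, and in principle an adversarial initial population consistent with the age bound could have depth as large as $\mu n/32$, violating the geometric tail at time~$0$. Your fix---extending the backward trace into a pre-initial history run under the same clearing dynamics---works only if one reads the hypothesis as asserting that the initial population was \emph{produced} by those dynamics (so the induction above applies across $t=0$). The paper's Moran-process argument needs only the raw age bound and no assumption on how the initial population arose. This is a modest difference in what is being assumed rather than a gap in your reasoning.
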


The following lemma describes a stochastic process that we will use in the proof of Theorem~\ref{the:smallpoptakeover} to model the number of ``young'' individuals over time. We are interested in the first hitting time of state~$\mu$ as this is the first point in time where young individuals have taken over the whole population of size~$\mu$. The transition probabilities for states $1 < X_t < \mu$ reflect the evolution of a fixed-size population containing two species (young and old in our case): in each step one individual is selected for reproduction, and another individual is selected for replacement. If they stem from the same species, the size of both species remains the same. But if they stem from different species, the size of the first species can increase or decrease by 1, with equal probability. 

This is similar to the \emph{Moran process} in population genetics~\cite[Section~3.4]{ewens2004mathematical} which ends when one species has evolved to fixation (i.\,e.\ has taken over the whole population) or extinction. Our process differs as state~1 is reflecting, hence extinction of young individuals is impossible. Notably, we will show that, compared to the original Moran process, the expected time for the process to end is larger by a factor of order $\log \mu$. Other variants of the Moran process have also appeared in different related contexts such as the analysis of Genetic Algorithms (Lemma~6 in~\citealp{Dang2016a}) and the analysis of the compact Genetic Algorithm (Lemma~7 in~\citealp{Sudholt2016a}). The following lemma gives asymptotically tight bounds on the time young individuals need to evolve to fixation.
\begin{lemma}
\label{lem:smallpoptakeover}
Consider a Markov chain $\{X\}_{t \ge 0}$ on $\{1, 2, \dots, \mu\}$ with transition probabilities\linebreak
$\Prob{X_{t+1}=X_t+1\mid X_t} = X_t(\mu-X_t)/\mu^2$ for $1 \le X_t < \mu$,
$\Prob{X_{t+1}=X_t-1\mid X_t} = X_t(\mu-X_t)/\mu^2$ for $1 < X_t < \mu$ and
$X_{t+1} = X_t$ with the remaining probability.
Let $T$ be the first hitting time of state 
$\mu$, then for all starting states $X_0$,
\[
\frac{1}{2} \cdot (\mu-X_0)\mu \ln(\mu-1) \le \E{T\mid X_0} \le 4(\mu-X_0)\mu \harm(\mu/2) \le 4\mu^2 \ln{\mu}. 
\]
In addition, if $\mu \le n$ then $\Prob{T \ge 8\mu^2 \log^3 n} \le n^{-\log n}$.
\end{lemma}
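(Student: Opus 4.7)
The plan is to view $\{X_t\}$ as a birth-death chain on $\{1,\dots,\mu\}$ with reflection at state $1$ and absorption at state $\mu$, where from each interior state $k$ the chain moves up or down by $1$ with the same probability $p_k := k(\mu-k)/\mu^2$ and stays otherwise. Letting $T_k$ denote the expected time to advance from $k$ to $k+1$, a standard first-step analysis (conditioning on whether the first step goes up, down, or stays, and noting that the down-move forces the chain to traverse $k-1 \to k$ again) yields the recurrence $T_k = 1/p_k + T_{k-1}$ for $k\ge 2$ with $T_1 = 1/p_1$. Iterating and using the partial-fraction identity $1/(j(\mu-j)) = (1/\mu)(1/j + 1/(\mu-j))$ gives the closed form
\[
T_k \;=\; \mu \sum_{j=1}^{k}\left(\frac{1}{j}+\frac{1}{\mu-j}\right),
\]
so that $\E{T\mid X_0} = \sum_{k=X_0}^{\mu-1} T_k$.

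For the upper bound I would use monotonicity of $T_k$ in $k$, so each term is at most $T_{\mu-1} = 2\mu H_{\mu-1}$, and combine with $H_{\mu-1}\le 2H_{\mu/2}$ (proved by splitting the harmonic sum at $\mu/2$ and bounding the tail by~$1$) to obtain $T_k\le 4\mu H_{\mu/2}$. Summing the $\mu-X_0$ terms yields $\E{T\mid X_0}\le 4(\mu-X_0)\mu H_{\mu/2}\le 4\mu^2\ln\mu$, as claimed. For the lower bound I would rearrange the sum (swapping the order of summation) to obtain $\sum_{k=X_0}^{\mu-1} T_k \;\ge\; \mu(\mu-X_0)H_{X_0} + \mu^2(H_{\mu-1}-H_{X_0})$ after discarding a non-negative cross-term, and split into two cases according to whether $X_0$ lies below or above roughly $\sqrt{\mu}$. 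If $X_0$ is ``large'' then $H_{X_0}\ge \tfrac12\ln(\mu-1)$ and the first term alone already exceeds $\tfrac12(\mu-X_0)\mu\ln(\mu-1)$; if $X_0$ is ``small'' then $H_{\mu-1}-H_{X_0}\ge \ln(\mu/(X_0+1)) \ge \tfrac12\ln\mu - O(1)$ and the second term, bounded below by roughly $\tfrac12\mu^2\ln(\mu-1)$, dominates.

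Finally, for the tail bound I would apply Markov's inequality in blocks. Since $\mu \le n$, the upper bound gives $\E{T\mid X_t} \le 4\mu^2\log n$ from any state $X_t < \mu$, so Markov's inequality shows that in any block of $8\mu^2\log n$ consecutive steps the probability of failing to hit state $\mu$ is at most $1/2$; by the Markov property this bound applies independently to each block conditional on the state at its start. Chaining $\log^2 n$ such blocks yields total length $8\mu^2\log^3 n$ and failure probability at most $2^{-\log^2 n} = n^{-\log n}$, as claimed. The main obstacle I expect is tuning the constants in the lower bound: the naive ``keep only the top half of the terms'' argument based on $T_{\mu/2}\ge \mu\ln(\mu/2)$ yields only a factor $1/4$ rather than the stated $1/2$, so reaching the precise constant $\tfrac12$ requires the case split at roughly $X_0=\sqrt{\mu}$ together with a careful harmonic-number estimate.
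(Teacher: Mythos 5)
Your proposal is correct and follows the same skeleton as the paper's proof: both reduce the hitting time to a sum of expected ``advance times'' from $k$ to $k+1$ (your $T_k$ is exactly the paper's $D_k=\e_k-\e_{k+1}$, with the same recurrence $T_k=1/p_k+T_{k-1}$, $T_1=1/p_1=\mu^2/(\mu-1)$, and the same closed form $T_k=\mu^2\sum_{j=1}^{k}\frac{1}{j(\mu-j)}$); the upper bound is obtained in both cases by bounding every advance time by $4\mu H_{\lfloor\mu/2\rfloor}$, and the tail bound via Markov's inequality iterated over $\log^2 n$ phases is identical. The one genuinely different step is the lower bound. The paper drops the $1/(\mu-j)$ half of the partial fraction, uses $H_k\ge\ln k$, and then a pairing trick on the product $\prod_{k=X_0}^{\mu-1}k\ge(\mu-1)^{(\mu-X_0)/2}$ --- that pairing is where the paper's factor $\tfrac12$ comes from. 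You instead swap the order of summation and keep both halves; your intermediate bound is indeed valid, since the exact value is $\mu^2(\mu-X_0)\sum_{j=1}^{X_0}\frac{1}{j(\mu-j)}+\mu^2(H_{\mu-1}-H_{X_0})$ and the part you discard, $\mu(\mu-X_0)\sum_{j=1}^{X_0}\frac{1}{\mu-j}$, is nonnegative. However, the obstacle you anticipate is not there: from $\mu^2\ge\mu(\mu-X_0)$ your expression is at least $\mu(\mu-X_0)\bigl(H_{X_0}+H_{\mu-1}-H_{X_0}\bigr)=\mu(\mu-X_0)H_{\mu-1}\ge(\mu-X_0)\mu\ln(\mu-1)$, which beats the stated constant by a factor of $2$ with no case split at all. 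That is just as well, because your Case~2 as sketched would not quite close on its own (the second term alone loses an additive $\Theta(\mu^2)$ against the target, which must be recovered from the first term); the clean one-line combination above avoids that bookkeeping entirely.
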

\begin{proof}
Let us abbreviate $\e_i=\E{T\mid X_t=i}$, then $\e_\mu=0$, $\e_1=\frac{\mu^2}{\mu-1}+\e_2$, and for all $1<i<\mu$ we have 
\begin{align*}
\e_i&=1+\frac{i(\mu-i)}{\mu^2}\cdot\e_{i+1}+\frac{i(\mu-i)}{\mu^2}\cdot\e_{i-1}+\left(1-\frac{2i(\mu-i)}{\mu^2}\right)\cdot\e_i\\
\Leftrightarrow \ \frac{2i(\mu-i)}{\mu^2}\cdot\e_i&=1+\frac{i(\mu-i)}{\mu^2}\cdot\e_{i+1}+\frac{i(\mu-i)}{\mu^2}\cdot\e_{i-1}\\
\Leftrightarrow \ 2\e_i&=\frac{\mu^2}{i(\mu-i)}+ \e_{i+1}+ \e_{i-1}\\
\Leftrightarrow \ \e_i-\e_{i+1}&=\frac{\mu^2}{i(\mu-i)}+\e_{i-1}-\e_i.
\end{align*}
Introducing $D_i:=\e_i-\e_{i+1}$, this is
\begin{align*}
D_i&=\frac{\mu^2}{i(\mu-i)}+D_{i-1}.
\end{align*}
For $D_1$ we get 
\[
D_1=\e_1-\e_2=\left(\frac{\mu^2}{\mu-1}+\e_2\right)-\e_2=\frac{\mu^2}{\mu-1}.
\]
More generally, we expand $D_i$ to get
\[
D_i=\sum_{j=1}^{i}\frac{\mu^2}{j(\mu-j)} =\mu^2\sum_{j=1}^{i}\frac{1}{j(\mu-j)}
\]
Now we can express $\e_i$ in terms of $D_i$ variables as follows. For all $1 \le i <\mu$, 
\[
\e_i=\underbrace{(\e_i-\e_{i+1})}_{D_i}+\underbrace{(\e_{i+1}-\e_{i+2})}_{D_{i+1}}+\cdots+\underbrace{(\e_{\mu-1}-\e_{\mu})}_{D_{\mu-1}}+\underbrace{\e_{\mu}}_{0}
\]
hence
\[
\e_i=D_i+\ldots+D_{\mu-1}= \mu^2\sum_{k=i}^{\mu-1} \sum_{j=1}^{k}\frac{1}{j(\mu-j)}
\]
We now bound this double-sum from above and below.
\begin{align*}
\mu^2 \sum_{k=i}^{\mu-1} \sum_{j=1}^{k}\frac{1}{j(\mu-j)}
\le\;& \mu^2 \sum_{k=i}^{\mu-1} \left(\sum_{j=1}^{\lfloor \mu/2 \rfloor}\frac{1}{j(\mu-j)} + \sum_{j=\lfloor \mu/2 \rfloor+1}^{k}\frac{1}{j(\mu-j)}\right)\\
\le\;& \mu^2 \sum_{k=i}^{\mu-1} \left(\sum_{j=1}^{\lfloor \mu/2 \rfloor}\frac{1}{j(\mu-j)} + \sum_{j=1}^{\lfloor \mu/2 \rfloor}\frac{1}{j(\mu-j)}\right)\\
\leq\;& \mu^2 \sum_{k=i}^{\mu-1} \frac{4}{\mu} \cdot \harm(\lfloor \mu/2 \rfloor)
= 4(\mu-i)\mu \harm(\lfloor \mu/2 \rfloor).
\end{align*}
The final inequality follows from $(\mu-i)\cdot\harm(\lfloor \mu/2 \rfloor)\leq\mu \ln \mu$.

The lower bound follows from 
\begin{align*}
\mu^2 \sum_{k=i}^{\mu-1} \sum_{j=1}^{k}\frac{1}{j(\mu-j)}
\geq \mu \sum_{k=i}^{\mu-1} \sum_{j=1}^{k}\frac{1}{j}
\geq \mu \sum_{k=i}^{\mu-1} \ln(k)
=\;& \mu \ln\left(\prod_{k=i}^{\mu-1} k\right)\\
\geq\;& \mu \ln\left((\mu-1)^{(\mu-i)/2}\right)\\
=\;& \frac{1}{2} \cdot (\mu-i)\mu \ln(\mu-1).
\end{align*}
where in the last inequality we used that $(\mu-1-j)(i+j) \ge \mu-1$ for all $0 \le j \le \mu-1$, allowing us to group factors in $\lfloor (\mu-i)/2 \rfloor$ pairs whose product is at least $\mu-1$, leaving a remaining factor of at least $\mu/2 \ge (\mu-1)^{1/2}$ if $(\mu-i)$ is odd.

For the second statement, we use standard arguments on independent phases. By Markov's inequality, the probability that takeover takes longer than $2\cdot(4\mu^2\ln{\mu})$ is at most $1/2$. Since the upper bound holds for any $X_0$, we can iterate this argument $\log^2 n$ times. Then the probability that we do not have a takeover in $2\cdot(4\mu^2\ln{\mu}\cdot\log^2 n) \le 8\mu^2 \log^3 n$ steps (using $\mu \le n$) is $2^{-\log^2 n}=n^{-\log n}$.
\end{proof}

Now we prove that the time required to reach a new niche with $\sigma=n/2$ is larger than the time required for ``young'' individuals to take over the population. In other words, once a winner $x^{*}$ is found and assigned to an optimum, with a small $\mu$, the time for a takeover is shorter than the required time to find a new niche. This will imply that the algorithm needs superpolynomial time to escape from the influence of the winner $x^{*}$ and in consequence it needs superpolynomial time to find the opposite optimum. 

We analyse the dynamics within the population by means of so-called \emph{family trees}. The analysis of EAs with family trees has been introduced by \cite{Witt2006} for the analysis of the \muea. According to Witt, a family tree is a directed acyclic graph whose nodes represent individuals and edges represent direct parent-child relations created by a mutation-based EAs. After initialisation, for every initial individual $r^{*}$ there is a family tree containing only $r^{*}$. We say that $r^{*}$ is the root of the family tree $T(r^{*})$. Afterwards, whenever the algorithm chooses an individual $x\in T(r^{*})$ as parent and creates an offspring $y$ out of $x$, a new node representing $y$ is added to $T(r^{*})$ along with an edge from $x$ to $y$. That way, $T(r^{*})$ contains all descendants from $r^{*}$ obtained by direct and indirect mutations.

There may be family trees containing only individuals that have been deleted from the current population. As $\mu$ individuals survive in every selection, at least one tree is guaranteed to grow. A subtree of a family tree is, again, a family tree. A (directed) path within a family tree from $x$ to $y$ represents a sequence of mutations creating $y$ out of $x$. The number of edges on a longest path from the root $x^*$ to a leaf determines the depth of $T(r^*)$.

\cite{Witt2006} showed how to use family trees to derive lower bounds on the optimisation time of mutation-based EAs. Suppose that after some time $t$ the depth of a family tree $T(r^*)$ is still small. Then typically the leaves are still quite similar to the root. Here we make use of Lemma~1 in \cite{Sudholt2009} (which is an adaptation from Lemma~2 and proof of Theorem~4 in \citealp{Witt2006}) to show that the individuals in $T(r^*)$ are still concentrated around $r^*$. If the distance from $r^*$ to all optima is not too small, then it is unlikely that an optimum has been found after $t$ steps.

\begin{lemma}[Adapted from Lemma~1 in \citealp{Sudholt2009}]
\label{lem:famtree}
For the \muea with or without clearing, let $r^{*}$ be an individual entering the population in some generation $t^{*}$. The probability that within the following $t$ generations some $y^{*}\in T(r^{*})$ emerges with $\Hamm{r^*}{y^*}\geq 8t/\mu$ is $2^{-\Omega(t/\mu)}$.
\end{lemma}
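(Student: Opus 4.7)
The plan is to apply the standard family-tree technique of \cite{Witt2006} as adapted by \cite{Sudholt2009}, with the main novelty being the observation that \clearing does not affect the argument: \clearing modifies only the fitness values used by survival selection, whereas parent selection (uniform over the current population) and standard bit-flip mutation are unchanged, and these are the only mechanisms that shape $T(r^{*})$ and the mutations labelling its edges. The argument proceeds by first bounding the depth of $T(r^{*})$ and then bounding the Hamming distance accumulated along a path of controlled depth.

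For the depth, I would fix any sequence of $d$ generations $g_1 < g_2 < \cdots < g_d$ within $\{t^{*}+1,\dots,t^{*}+t\}$ and observe that the probability that a chain of length $d$ is assembled exactly at these generations is at most $\mu^{-d}$: step $i$ requires the parent selected in generation $g_i$ to be the specific individual created in generation $g_{i-1}$ (or $r^{*}$ for $i=1$), which has probability at most $1/\mu$ independently across generations. A union bound over the $\binom{t}{d}$ schedules yields $\Prob{\mathrm{depth}(T(r^{*})) \geq d} \leq \binom{t}{d}\mu^{-d} \leq (et/(d\mu))^d$. Setting $d=4t/\mu$ gives $(e/4)^{4t/\mu} = 2^{-\Omega(t/\mu)}$, and summing over $d \geq 4t/\mu$ costs only a geometric factor.

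Next I would condition on $\mathrm{depth}(T(r^{*})) \leq 4t/\mu$ and fix any descendant $y^{*}$ at depth $d \leq 4t/\mu$ on the chain from $r^{*}$. Its $d$ mutations flip each bit independently with probability $1/n$, so the total number of bit flips along the chain is $\Bin{dn,1/n}$ with mean $d \leq 4t/\mu$. Because $\Hamm{r^{*}}{y^{*}}$ cannot exceed this total (repeated flips of the same bit only cancel), a Chernoff bound gives that this total exceeds $8t/\mu$ with probability $2^{-\Omega(t/\mu)}$. Union-bounding over the at most $t+1$ nodes ever in $T(r^{*})$ costs only a polynomial factor, which is absorbed into the exponential tail (and the bound is trivial when $t/\mu = O(\log t)$). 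The main subtlety is making the depth estimate rigorous in the presence of unspecified genotypes; conditioning on the generation schedule and observing that each required parent-selection event has probability at most $1/\mu$ irrespective of the other choices lets the chain's existence-probability factor cleanly, and the remaining estimates are routine Chernoff bounds.
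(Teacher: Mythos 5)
Your proposal is correct and follows essentially the same route as the paper: observe that clearing leaves parent selection and mutation untouched, bound the family-tree depth by $O(t/\mu)$ using uniform parent selection, apply a Chernoff bound to the number of bit flips accumulated along a path of that depth, and union-bound over nodes/paths. The only difference is that you unfold the depth bound explicitly (the $\binom{t}{d}\mu^{-d}$ counting argument) where the paper cites Lemma~2 of Witt (2006), which is the same underlying calculation.
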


Lemma~1 in \cite{Sudholt2009} applies to ($\mu$+$\lambda$)~EAs without \clearing. We recap Witt's basic proof idea to make the paper self-contained and also to convince the reader why the result also applies to the \muea with \clearing.

The analysis is divided in two parts. In the first part it is shown that family trees are unlikely to be very deep. Since every individual is chosen as parent with probability $1/\mu$, the expected length of a path in the family tree after $t$ generations is bounded by $t/\mu$. Large deviations from this expectation are unlikely. Lemma~2 in~\cite{Witt2006} shows that the probability that a family tree has depth at least $3t/\mu$ is $2^{-\Omega(t/\mu)}$. This argument only relies on the fact that parents are chosen uniformly at random, which also holds for the \muea with \clearing.

For family trees whose depth is bounded by $3t/\mu$, all path lengths are bounded by $3t/\mu$. Each path corresponds to a sequence of standard bit mutations, and the Hamming distance between any two search points on the same path can be bounded by the number of bits flipped in all mutations that lead from one search point to the other.
By applying Chernoff bounds (see \citealp{Motwani1995}) with respect to the upper bound $4t/\mu$ on the expectation instead of the expectation itself (cf.~\citealp[page 75]{Witt2006}), we obtain that the probability of an individual of Hamming distance at least $8t/\mu$ to $r^*$ emerging on a particular path is at most $e^{-4t/(3\mu)}$. Taking the union bound over all possible paths in the family tree still gives a failure probability of $2^{-\Omega(t/\mu)}$. Adding the failure probabilities from both parts proves the claim.

Now, Lemma~\ref{lem:famtree} implies the following corollary.

\begin{corollary}
\label{cor:lineagehamwitt}
The probability that, starting from a search point~$x^*$, within $\mu n/16$ generations the \muea with clearing evolves a lineage that reaches Hamming distance at least $n/2$ to its founder~$x^*$ is $2^{-\Omega(n)}$.
\end{corollary}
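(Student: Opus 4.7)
The plan is to obtain this corollary as a direct specialisation of Lemma~\ref{lem:famtree}. The lemma already gives, for any individual $r^*$ entering the population at generation $t^*$, an upper bound $2^{-\Omega(t/\mu)}$ on the probability that within the subsequent $t$ generations the family tree $T(r^*)$ contains some $y^*$ with $\Hamm{r^*}{y^*} \ge 8t/\mu$. So the task reduces to a parameter substitution.

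First I would set $r^* := x^*$ (the founding search point mentioned in the statement) and choose $t := \mu n/16$. With this choice, $8t/\mu = 8 \cdot \mu n/(16\mu) = n/2$, which matches the threshold Hamming distance in the statement of the corollary. Every lineage starting from $x^*$ that the \muea with \clearing can evolve is, by definition, a path in the family tree $T(x^*)$; reaching Hamming distance at least $n/2$ from $x^*$ via such a lineage is exactly the event that some $y^* \in T(x^*)$ satisfies $\Hamm{x^*}{y^*} \ge n/2$.

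Then I would plug $t = \mu n/16$ into the failure probability in Lemma~\ref{lem:famtree}, giving $2^{-\Omega(t/\mu)} = 2^{-\Omega(n/16)} = 2^{-\Omega(n)}$. Since Lemma~\ref{lem:famtree} is explicitly stated for the \muea with or without \clearing (its proof only uses that parents are chosen uniformly at random, and that bounds on family-tree depth and bits flipped along each path are governed purely by the mutation operator), the bound applies to the algorithm considered here.

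There is really no technical obstacle, only the care of matching parameters: making sure that the implicit constants absorbed into $\Omega(\cdot)$ remain linear in $n$ when divided by~$\mu$. Since we divide $t = \mu n/16$ by $\mu$ exactly, the $\mu$ cancels and the exponent scales linearly with $n$, independently of the (possibly $n$-dependent) population size. This completes the argument.
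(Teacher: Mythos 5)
Your proposal is correct and is precisely the argument the paper intends: Corollary~\ref{cor:lineagehamwitt} is obtained by instantiating Lemma~\ref{lem:famtree} with $r^*=x^*$ and $t=\mu n/16$, so that $8t/\mu=n/2$ and the failure probability $2^{-\Omega(t/\mu)}$ becomes $2^{-\Omega(n)}$. Your additional remark that the $\mu$ cancels exactly, making the exponent linear in $n$ regardless of the population size, is a correct and worthwhile sanity check.
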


Now we put Lemma~\ref{lem:smallpoptakeover} and Corollary~\ref{cor:lineagehamwitt} together to prove Theorem~\ref{the:smallpoptakeover}.
\begin{proof}[Proof of Theorem~\ref{the:smallpoptakeover}]
By assumption, all individuals in the population are descendants of individuals with genotype $x^*$, and this property will be maintained over time. This means that every individual $x$ in the population $P_t$ at time~$t$ will have an ancestor that has genotype~$x^*$ (our notion of \emph{ancestor} and \emph{descendant} includes the individual itself). Tracing back $x$'s ancestry, let $t^* \le t$ be the most recent generation where an ancestor of $x$ has genotype~$x^*$. Then we define the \emph{age} of $x$ as $t - t^*$. Informally, the age describes how much time a search point has had to evolve differences from the genotype~$x^*$.
Note that the age of $x^*$ itself is always 0 and as the population always contains a winner~$x^*$, it always contains at least one individual of age~0.

Now assume that a new search point $x$ is created with $\Hamm{x}{x^*} \ge n/2$. If $x$ has age at most $\mu n/16$ then there exists a lineage from a copy of $x^*$ to~$x$ that has emerged in at most $\mu n/16$ generations. This corresponds to the event described in Corollary~\ref{cor:lineagehamwitt}, and by said corollary the probability of this event happening is at most $2^{-\Omega(n)}$. Taking the union bound over all family trees (of which there are at most~$\mu$ in every generation) and the first $n^{\log n}$ generations, the probability that such a lineage does emerge in any family tree and at any point in time within the considered time span is still bounded by $\mu n^{\log n} \cdot 2^{-\Omega(n)} = 2^{-\Omega(n)}$.

We now show using Lemma~\ref{lem:smallpoptakeover} that it is very unlikely that individuals with age larger than $\mu n/16$ emerge. We say that a search point $x$ is \emph{$T$-young} if it has genotype~$x^*$ or if its most recent ancestor with genotype~$x^*$ was born during or after generation~$T$. Otherwise, $x$ is called \emph{$T$-old}. We omit the parameter ``$T$'' whenever the context is obvious. A key observation is that youth is inheritable: if a young search point is chosen as parent, then the offspring is young as well. If an old search point is chosen as parent, then the offspring is old as well, unless mutation turns the offspring into a copy of~$x^*$.

Let $X_t$ be the number of young individuals in the population at time~$t$, and pessimistically ignore the fact that old individuals may create young individuals through lucky mutations. Then in order to increase the number of young individuals, it is necessary and sufficient to choose a young individual as parent (probability $X_t/\mu$) and to select an old parent for replacement. The probability of the latter is $(\mu-X_t)/\mu$ as there are $\mu-X_t$ old parents and the individual to be removed is chosen uniformly at random among $\mu$ individuals whose fitness is cleared. Hence, for $1 \le X_t < \mu$, $\Prob{X_{t+1}=X_t+1\mid X_t} = X_t(\mu-X_t)/\mu^2$. Similarly, the number of old individuals increases if and only if an old individual is chosen as parent (probability $(\mu-X_t)/\mu$) and a young individual is chosen for replacement (probability $X_t/\mu$), hence for $1 < X_t < \mu$ we have $\Prob{X_{t+1}=X_t-1\mid X_t} = X_t(\mu-X_t)/\mu^2$. Otherwise, $X_{t+1} = X_t$. Note that $X_t \ge 1$ since the winner~$x^*$ is young and will never be removed. This matches the Markov chain analysed in Lemma~\ref{lem:smallpoptakeover}. 

Now consider a generation $T$ where all individuals in the population have ages at most $\mu n/32$. By assumption, this property is true for the initial population. At time~$T$, the population contains at least one $T$-young individual: the winner~$x^*$. By Lemma~\ref{lem:smallpoptakeover}, with probability at least $1-n^{-\log n}$, within the next $8\mu^2 \log^3 n \le \mu n/32$ generations, using the condition $\mu\leq n/(4\log^3 n)$, the population will reach a state where $X_t = \mu$, that is, all individuals are $T$-young. Assuming this does happen, let $T' \le T + \mu n/32$ denote the first point in time where this happens. Then at time $T'$ all individuals have ages at most $\mu n/32$, and we can iterate the above arguments with $T'$ instead of~$T$.

Each such iteration carries a failure probability of at most $n^{-\log n}$. Taking the union bound over failure probabilities $n^{-\log n}$ over the first $n^{(\log n)/2}$ generations yields that the probability of an individual of age larger than $\mu n/16$ emerging is only $n^{(\log n)/2} \cdot n^{-\log n} = n^{-(\log n)/2}$.

Adding failure probabilities $2^{-\Omega(n)}$ and $n^{-(\log n)/2}$ completes the proof.
\end{proof}
 
We conjecture that a population size of $\mu=O(n)$ is sufficient to optimise $\twomax$ in expected time $O(\mu n \log n)$, that is, that the conditions in Theorem~\ref{the:few-niches-twomax} can be improved.

\section{Generalisation to Other Example Landscapes}
\label{sec:genexaland}

Note that, in contrast to previous analyses of \emph{fitness sharing}~\citep{Friedrich2009,Oliveto2014a}, our analysis of the \clearing mechanism does not make use of the specific fitness values of \twomax. The main argument of how to escape from one local optimum only depends on the size of its basin of attraction. Our results therefore easily extend to more general function classes that can be optimised by leaving a basin of attraction of width at most $n/2$.

We consider more general classes of examples landscapes introduced by \cite{Jansen2016} addressing the need for suitable benchmark functions for the theoretical analysis of evolutionary algorithms on multimodal functions. Such benchmark functions allows the control of different features like the number of peaks (defined by their position), their slope and their height (provided in an indirect way), while still enabling a theoretical analysis. Since this benchmark setting is defined in the search space $\{0,1\}^n$ and it uses the Hamming distance between two bit strings, it matches perfectly with the current investigation.

\cite{Jansen2016} define their notion of a landscape as the number of peaks $k\in \mathbb{N}$ and the definition of the $k$ peaks (numbered $1,2,\ldots,k$) where the $i$-th peak is defined by its position $p_i\in \{0,1\}^n$, its slope $a_i\in \mathbb{R}^{+}$, and its offset $b_i\in \mathbb{R}_{0}^{+}$. The general idea is that the fitness value of a search point depends on peaks in its vicinity. The main objective for any optimisation algorithm operating in this landscape is to identify those peaks: a highest peak in exact optimisation or a collection of peaks in multimodal optimisation. A peak has been identified or reached if the Hamming distance of a search point $x$ and a peak $p_i$ is $\Hamm{x}{p_i}=0$. Since we are considering maximisation, it is more convenient to consider $\G{x}{p_i}:=n-\Hamm{x}{p_i}$ instead.

There are three different fitness functions used to deal with multiple peaks in \cite{Jansen2016}; we consider the two most interesting function classes $f_1$ and $f_2$ defined in the following.
We only consider genotypic clearing in the following as phenotypic clearing only makes sense for functions of unitation.

\begin{definition}[Definition~3 in \citealp{Jansen2016}]
\label{def:fam_func}
Let $k\in \mathbb{N}$ and $k$ peaks $(p_1,a_1,b_1),(p_2,a_2,\allowbreak b_2),\ldots,(p_k,a_k,b_k)$ be given, then
\begin{itemize}
\item $f_1(x):=a_{\CP{x}}\cdot \G{x}{p_{\CP{x}}}+b_{\CP{x}}$, called the nearest peak function.
\item $f_2(x):=\displaystyle\max_{i\in \{1,2,\ldots,k\}} a_{i}\cdot \G{x}{p_{i}}+b_{i}$, called the weighted nearest peak function.
\end{itemize}
where $\CP{x}$ is defined by the closest peak (given by its index $i$) to a search point 
\[
\CP{x}:=\argmin_{i\in \{1,2,\ldots,k\}} \Hamm{x}{p_i}.
\]
\end{definition}

The nearest peak function, $f_1$, has the fitness of a search point $x$ determined by the closest peak $i = \CP{x}$ that determines the slope $a_i$ and the offset $b_i$. In cases where there are multiple $i$ that minimise $\Hamm{x}{p_i}$, $i$ should additionally maximise $a_i \cdot \G{x}{p_i} + b_i$. If there is still not a unique individual, a peak $i$ is selected uniformly at random from those that minimises $\Hamm{x}{p_i}$ and those that maximises $a_i \cdot \G{x}{p_i} + b_i$.

The weighted nearest peak function, $f_2$, takes the height of peaks into account. It uses the  peak $i$ that yields the largest value to determine the function value. The bigger the height of the peak, the bigger its influence on the search space in comparison to smaller peaks.

\subsection{Nearest Peak Functions}
\label{sec:nepefu}

We first argue that our results easily generalise to nearest peak functions with two complementary peaks $p_2 = \overline{p_1}$, arbitrary slopes $a_1, a_2 \in \mathbb{R}^{+}$, and arbitrary offsets $b_i\in \mathbb{R}_{0}^{+}$. The generalisation from peaks $0^n, 1^n$ as for \twomax to peaks $p_2 = \overline{p_1}$ is straightforward: we can swap the meaning of zeros and ones for any selection of bits without changing the behaviour of the algorithm, hence the \muea with \clearing will show the same stochastic behaviour on peaks $0^n, 1^n$ as well as on arbitrary peaks $p_2 = \overline{p_1}$. As for \twomax, if only one peak $x^*$ has been found, the basin of attraction of the other peak is found once a search point with Hamming distance at least~$n/2$ to~$x^*$ is generated. If the \clearingradius is set to $\sigma = n/2$, the \muea with \clearing will create a new niche, and from there it is easy to reach the complementary optimum $\overline{x^*}$. In fact, our analyses from Section~\ref{sec:large_niches} never exploited the exact fitness values of \twomax; we only used information about basins of attraction, and that it is easy to locate peaks via hill climbing. We conclude our findings in the following corollary.

\begin{corollary}
\label{cor:generalisation_twomax}
The expected time for the \muea with genotypic clearing, $\kappa\in\mathbb{N}$, $\mu \geq \kappa n^2/4$, $\mu\leq\poly{n}$ and $\sigma = n/2$ finding both peaks on any nearest peak function $f_1$ with two complementary peaks $p_2 = \overline{p_1}$ is $O(\mu n \log{n})$.

If $\mu\leq n/(4\log^3 n)$, $\kappa=1$ and $\sigma=n/2$, and the \muea starts with a population containing only search points that have evolved from one optimum $x^*$ within the last $\mu n/32$ generations, the probability that both optima are found within time $n^{(\log n)/2}$ is $n^{-\Omega(\log n)}$.
\end{corollary}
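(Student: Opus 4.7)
My plan is to reduce Corollary~\ref{cor:generalisation_twomax} to Theorem~\ref{the:few-niches-twomax} and Theorem~\ref{the:smallpoptakeover} via two observations: the \muea with genotypic \clearing is invariant under XOR-symmetries of the hypercube, and the earlier proofs never exploited the specific fitness values of \twomax, only the basin geometry and monotone hill-climbability of the slopes. For the symmetry, let $\phi(x) := x \oplus p_1$, so $\phi(p_1) = 0^n$ and $\phi(\overline{p_1}) = 1^n$. Standard bit mutation with flip probability $1/n$ commutes with $\phi$ and Hamming distance is preserved, $\Hamm{\phi(x)}{\phi(y)} = \Hamm{x}{y}$, hence genotypic \clearing, uniform parent selection and worst-individual replacement all commute with $\phi$. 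The \muea with \clearing on $f_1$ therefore produces the same distribution over trajectories (up to $\phi$) as on the transformed landscape $g := f_1 \circ \phi^{-1}$, whose peaks sit at $0^n$ and $1^n$; on $g$ the fitness of $x$ with $\ones{x} \le n/2$ equals $a_1 \zeros{x} + b_1$, and on the complementary half equals $a_2 \ones{x} + b_2$. It thus suffices to prove both parts of the corollary for $g$.

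For the upper bound I would re-run the four stages of the proof of Theorem~\ref{the:few-niches-twomax}. Stage~(i), finding one optimum $\winner \in \{0^n, 1^n\}$ by hill climbing: at Hamming distance $d \ge 1$ from the nearer peak, flipping exactly one of the $d$ disagreeing bits strictly increases fitness (since $a_1, a_2 > 0$), so the level-based argument still gives $O(\mu n \log n)$. Stage~(ii), accumulating $\kappa$ copies of $\winner$ via zero-flip mutations, depends only on mutation probabilities and costs $O(\mu \log n)$. Stage~(iii), crossing to the opposite basin, invokes Lemma~\ref{lem:time-to-clearing-radius} with $\mindist = \sigma = n/2$ and $\mu \ge \kappa n^2/4$; that lemma is already fitness-agnostic, requiring only a unique winning genotype with $\kappa$ copies and every other individual strictly inside its niche. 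Stage~(iv), hill-climbing the opposite slope: once a search point $y$ with $\Hamm{y}{\winner} \ge n/2$ appears, it lies outside $\winner$'s niche (the test $\dist{\cdot}{\cdot} < \sigma$ in Algorithm~\ref{alg:clearing} is strict), so $y$ becomes a winner with positive cleared fitness and is accepted by beating a cleared individual of fitness $0$; the argument of stage~(i) applied from $y$ then reaches the opposite peak in $O(\mu n \log n)$ expected generations. Summing and using $\mu \le \poly{n}$ yields the claimed $O(\mu n \log n)$.

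For the lower bound the proof of Theorem~\ref{the:smallpoptakeover} transfers essentially verbatim: its two ingredients, Lemma~\ref{lem:smallpoptakeover} on takeover dynamics and Corollary~\ref{cor:lineagehamwitt} on the Hamming radius of family trees, use only the Markov structure of \clearing (unique winner, every other individual cleared to fitness $0$, uniform-random removal among cleared individuals) and the mutation/parent-selection mechanics respectively, never the fitness function itself. Since reaching the complementary peak still requires Hamming distance $n$ from $\winner$ (hence at least $n/2$ to leave its niche), the race between takeover and family-tree spread unfolds exactly as for \twomax, yielding a failure probability of $n^{-\Omega(\log n)}$ within $n^{(\log n)/2}$ generations. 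The most delicate point I anticipate is ruling out that $f_1$-specific quirks perturb the upper-bound argument: the tie-breaking in $\CP{x}$ for points equidistant from both peaks, and the possibility $a_1 n + b_1 \ne a_2 n + b_2$. The first is harmless because any point $y$ with $\Hamm{y}{\winner} = n/2$ already lies outside $\winner$'s niche and seeds a new niche regardless of how $\CP{y}$ is resolved; the second is harmless because $y$'s acceptance into $P_{t+1}$ only requires beating a cleared individual of fitness $0$, not matching $\winner$'s fitness.
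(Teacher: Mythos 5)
Your proposal is correct and follows essentially the same route as the paper: the paper likewise argues via the bit-swap symmetry reducing complementary peaks $p_2 = \overline{p_1}$ to $0^n,1^n$, and then observes that the proofs of Theorem~\ref{the:few-niches-twomax} and Theorem~\ref{the:smallpoptakeover} never exploited the exact \twomax fitness values, only the basin geometry and hill-climbability of the slopes. Your write-up is in fact more explicit than the paper's (which offers only the informal paragraph preceding the corollary), notably in verifying the tie-breaking at distance $n/2$ and the acceptance of the first cross-basin winner.
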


\subsection{Weighted Nearest Peak Functions}
\label{sec:weneapea}
For $f_2$ things are different: the larger the peak, the larger is its influence area of the search space in comparison to smaller peaks and thus will have a larger basin of attraction. These asymmetric variants with suboptimal peaks with smaller basin of attraction and peaks with larger basin of attraction are similar to the analysis made in Section~\ref{sec:nepefu}, as long as the parameter $\sigma$ is set as the maximum distance between the peaks necessary to form as many niches as there are peaks in the solution, and the restriction $0\leq d\leq n/2$ of Lemma~\ref{lem:time-to-clearing-radius} is met, the same analysis can be applied for this instance of the family of landscapes benchmark.

According to $f_2$ in Definition~\ref{def:fam_func}, the bigger the height of the peak, the bigger its influence on the search space in comparison to the smaller peaks. Let $\B_i$ denote the basin of attraction of the highest peak $p_i$, as long as $0\leq\B_i\leq n/2$ from Lemma~\ref{lem:time-to-clearing-radius} it will be possible to escape from the influence of $p_i$ and create a new winner from a new niche with distance $\Hamm{x}{p_i}\geq \B_i$. Jansen and Zarges show~\cite[Theorem~2]{Jansen2016} that for two complementary peaks $p_2 = \overline{p_1}$ the basin of attraction of $p_1$ contains all search points $x$ with 
\[
n - \Hamm{x}{p_1} < \frac{a_2}{a_1 + a_2} \cdot n + \frac{b_2 - b_1}{a_1 + a_2}.
\]
Using that the peaks are complementary, a symmetric statement holds for $\B_2$. Note that in the special case of $a_1 = a_2$ and $b_1 = b_2$ the right-hand side simplifies to~$n/2$.

Along with our previous upper bound on \twomax from Theorem~\ref{the:few-niches-twomax} it is easy to show the following result for a large class of weighted nearest peak functions~$f_2$.
\begin{theorem}
\label{the:fam_f2}
For all weighted nearest peak functions $f_2$ with two complementary peaks $p_2 = \overline{p_1}$ meeting the following conditions on $a_1,a_2 \in \mathbb{R}^+$ and $b_1,b_2\in \mathbb{R}_0^+$ and the clearing radius~$\sigma$
\begin{align*}
f_2(p_1) \le f_2(p_2) &\;\Rightarrow\; \frac{a_1}{a_1 + a_2} \cdot n + \frac{b_1 - b_2}{a_1 + a_2} \le \sigma \le \frac{n}{2}\\
f_2(p_2) \le f_2(p_1) &\;\Rightarrow\; \frac{a_2}{a_1 + a_2} \cdot n + \frac{b_2 - b_1}{a_1 + a_2} \le \sigma \le \frac{n}{2}
\end{align*}
the expected time for the \muea with genotypic clearing, $\kappa\in\mathbb{N}$, $\mu \ge \kappa \cdot \frac{\sigma n - 2\sigma + 2}{n-2\sigma+2}$, 
$\mu\leq\poly{n}$ and clearing radius~$\sigma$ finding all global optima of $f_2$ is $O(\mu n \log{n})$.
\end{theorem}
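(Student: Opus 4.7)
The plan is to reuse the four-phase structure of the proof of Theorem~\ref{the:few-niches-twomax} almost verbatim, the only new ingredient being a careful matching of the theorem's two conditional hypotheses on~$\sigma$ to the case in which the first peak the algorithm reaches is not already a global optimum.

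As a preliminary step I would relabel bits so that $p_1=0^n$ and $p_2=1^n$; this is permitted because $p_2=\overline{p_1}$ and the induced involution on $\{0,1\}^n$ preserves Hamming distances, so the stochastic behaviour of the \muea with genotypic \clearing is unchanged. After relabelling, $f_2(x)=\max\{a_1(n-\ones{x})+b_1,\,a_2\ones{x}+b_2\}$ is a function of unitation whose two linear branches meet at $\ones{x}=\B_1:=(a_1 n+b_1-b_2)/(a_1+a_2)$, which is exactly the Hamming radius of~$p_1$'s basin of attraction; the radius of~$p_2$'s basin is $\B_2=n-\B_1$.

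In the first phase, the elitism inherited from \clearing keeps the best-so-far winner in the population, so a standard fitness-level argument (as in the first paragraph of the proof of Theorem~\ref{the:few-niches-twomax}) shows that some peak $\winner\in\{p_1,p_2\}$ is reached in expected $O(\mu n\log n)$ steps. If $\winner$ is already a global optimum the theorem is established; otherwise $\winner=p_i$ is strictly smaller, $f_2(p_i)<f_2(p_{3-i})$, and the matching conditional hypothesis of the theorem yields $\B_i\le\sigma\le n/2$. In the second phase, $\kappa$ copies of~$\winner$ accumulate in $O(\mu\log n)$ expected generations through zero-bit mutations, exactly as in Theorem~\ref{the:few-niches-twomax}. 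In the third phase, the hypothesis on~$\mu$ is precisely what Lemma~\ref{lem:time-to-clearing-radius} requires with $\mindist=\sigma$, and the constraint $\mindist\le\sigma\le n/2$ is met; the lemma then produces, in expected $O(\mu n\log\mu)=O(\mu n\log n)$ steps, a search point~$y$ with $\Hamm{y}{\winner}\ge\sigma$. Because $\sigma\ge\B_i$, this $y$ lies outside~$\winner$'s basin and hence in the basin of~$p_{3-i}$; being also outside~$\winner$'s \clearingradius, $y$ is a winner of a new niche, and with two niches of capacity~$\kappa$ and $\mu$ comfortably exceeding $2\kappa$, cleared individuals are always available for deletion so that $y$ is never removed. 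A fourth phase of hill climbing from~$y$ reaches~$p_{3-i}$ in another $O(\mu n\log n)$ expected steps, and summing over the four phases yields the claimed bound.

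The main obstacle is the case analysis just described: Lemma~\ref{lem:time-to-clearing-radius} only certifies escape to distance~$\sigma$, which in the asymmetric case is strictly less than the basin radius of the higher peak, so an escape originating at the higher peak would in general not cross the basin boundary. The purpose of the theorem's two conditional hypotheses is precisely to avoid ever needing such an escape: whenever the higher peak is reached first the theorem is already true, and whenever the lower peak~$p_i$ is reached first the hypothesis matched to $f_2(p_i)\le f_2(p_{3-i})$ supplies exactly the $\B_i\le\sigma$ needed to make the escape in phase~3 cross the basin boundary. Keeping track of which hypothesis is invoked where is the only bookkeeping beyond the proof of Theorem~\ref{the:few-niches-twomax}.
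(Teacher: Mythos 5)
Your proposal follows essentially the same route as the paper's proof: a fitness-level phase to reach one peak, accumulation of $\kappa$ copies of that winner, an application of Lemma~\ref{lem:time-to-clearing-radius} with $\mindist=\sigma$, and the observation that the conditional hypothesis on $\sigma$ together with the basin-of-attraction characterisation of Jansen and Zarges guarantees the escaping search point lands in the other peak's basin, from which hill climbing finishes in $O(\mu n\log n)$. The only slip is the claim that ``if $\winner$ is already a global optimum the theorem is established'': when $f_2(p_1)=f_2(p_2)$ both peaks are global optima, so finding one does not finish the job; but in that case both conditional hypotheses are active and force $\sigma=n/2=\B_1=\B_2$, so your phases 2--4 apply verbatim from whichever peak is found first and the patch is immediate.
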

Note that in case $f_2(p_1) \neq f_2(p_2)$ there is only one global optimum: the fitter of the two peaks. Then the respective condition (where the left-hand side inequality is true) implies that the basin of attraction of the less fit peak must be bounded by $n/2$. If this condition is not satisfied, the function is deceptive as the majority of the search space leads towards a non-optimal local optimum. 

In case $f_2(p_1) = f_2(p_2)$ both peaks are global optima and the conditions require that both basins of attraction have size $n/2$:
\[
\sigma = \frac{a_1}{a_1 + a_2} \cdot n + \frac{b_1 - b_2}{a_1 + a_2} = \frac{a_2}{a_1 + a_2} \cdot n + \frac{b_2 - b_1}{a_1 + a_2} = \frac{n}{2}.
\]
\begin{proof}[Proof of Theorem~\ref{the:fam_f2}]
The proof is similar to the proof of Theorem~\ref{the:few-niches-twomax}. Assume without loss of generality that $f_2(p_1) \le f_2(p_2)$. Using the same arguments as in said proof (with straightforward changes to the fitness-level calculations), the \muea finds one peak in expected time $O(\mu n \log n)$. If this is $p_1$, the \muea still needs to find~$p_2$. By the same arguments as in the proof of Theorem~\ref{the:few-niches-twomax}, the \muea's population will contain $\kappa$ copies of $p_1$ in expected time $O(\mu \log n)$. Applying Lemma~\ref{lem:time-to-clearing-radius} with $\mindist=\sigma$ yields that the expected time to find a search point~$x$ with Hamming distance at least $\sigma$ to~$p_1$ is $O(\mu n \log n)$. Since $\frac{a_1}{a_1 + a_2} \cdot n + \frac{b_1 - b_2}{a_1 + a_2} \le \sigma$, by Theorem~2 in~\cite{Jansen2016}, $x$ is outside the basin of attraction of~$p_1$. As it is also a winner in a new niche, this new niche will never be removed, and $p_2$ can be reached by hill climbing on a \onemax-like slope from~$x$. By previous arguments, $p_2$ will then be found in expected time $O(\mu n \log n)$.
\end{proof}

As a final remark, the analysis has shown that it is possible to escape of the basin of attraction of the higher peak with $\B \leq n/2$, this does not mean that the analysis cannot be applied to $\B \geq n/2$. We need to remember that the current investigation considers a distance $d\leq n/2$ because any distance larger than $n/2$ may lead to a exponential expected time in $n$ for reaching a distance of at least $d$ from $\winner$. One way to avoid this limitation is by dividing the distance $d$ into several niches by setting the parameter $\sigma\leq n/2$ properly. In this analysis we just considered the population dynamics and its ability of escaping a basin of attraction of at most $n/2$ or escaping from a niche with radius at most $n/2$ but it may be possible to generalise the population dynamics for more than 2 niches with sizes $\leq n/2$ by changing/adapting our definition of the potential function. For the time being we rely on experiments in Section~\ref{sec:escdifbasatt} to show that the population can jump from niches with $\sigma\leq n/2$ allowing to find both optimum in different variants of \twomax from the classes of example functions and leave the generalisation of the population dynamics for future theoretical work.

\section{Experiments}
\label{sec:exp}

The experimental approach is focused on the analysis of the \muea and is divided in 3 experimental frameworks. Section~\ref{sec:empangen} is focused on an empirical analysis for the general behaviour of the algorithm, the relationship between the parameters $\sigma$, $\kappa$, and $\mu$, and how these parameters can be set. The main objective is to compare our asymptotic theoretical results with empirical data for concrete parameter values.

For the second empirical analysis (Section~\ref{sec:empanpopsize}), we focus our attention on the population size for small ($n=30$) and large ($n=100$) problem sizes. The objective is to observe whether smaller population sizes than $\mu = \kappa n^2/4$ are capable of optimising \twomax and compare if the quadratic dependence on $n$ is an artefact of our approach. Also we compare two different forms of initialising the population: the standard uniform random initialisation against a biased initialisation where the whole population is initialised with copies of one peak ($0^n$ for \twomax). Biased initialisation is used in order to observe how \clearing is able to escape from a local optimum and how fast it is compared to a random initialisation.

Finally, for the third analysis (Section~\ref{sec:escdifbasatt}), we show that it is possible to escape from different basin of attractions for weighted peak functions with two peaks in cases where the two peaks are not complementary, but have different Hamming distances. 

\subsection{General Behaviour}
\label{sec:empangen}

We are interested in observing if the \muea with \clearing is able to find both optima on \twomax, so we consider exponentially increasing population sizes~$\mu = \{2,4,8,\ldots,1024\}$ for just one size of $n=30$ and perform $100$ runs with different settings of parameters $\sigma$ and $\kappa$, so for this experimental framework, we have defined $\sigma = \{1, 2, \sqrt[]{n}, n/2\}$, $\kappa = \{1, \sqrt[] {\mu}, \mu/2, \mu\}$ with phenotypic distance since it has been proven that this distance metric works for both cases, small and large niches (when the genotypic distance is used it will be explicitly mentioned).

Since we are interested in proving how good/bad \clearing is, we define the following outcomes and stopping criteria for each run. \emph{Success}, the run is stopped if the population contains both $0^n$ and $1^n$ in the population. \emph{Failure}, once the run has reached 1 million of generations and one of the two (or both) optima are not contained in the population. All the results are shown in Table~\ref{tab:expgenres}.

\begin{table}[ht]
\centering
\caption{Success rate measured among $100$ runs for the \muea with phenotypic clearing on \twomax for $n=30$ for the different parameters \clearingradius $\sigma$, \nichecapacity~$\kappa$ and \emph{population size} $\mu$.}
\label{tab:expgenres}
\begin{tabular}{ccccccccccc}  
\toprule
\multicolumn{11}{c}{\boldmath$\sigma = 1$}\\
\midrule
\multirow{2}{*}{$\kappa$} & \multicolumn{10}{c}{$\mu$} \\
\cmidrule(l){2-11}
 & $2$ & $4$ & $8$ & $16$ & $32$ & $64$ & $128$ & $256$ & $512$ & $1024$ \\
\midrule
$1$ & $0.0$ & $0.05$ & $0.96$ & $1.0$ & $1.0$ & $1.0$ & $1.0$ & $1.0$ & $1.0$ & $1.0$ \\
$\sqrt[]{\mu}$ & $0.0$ & $0.0$ & $0.0$ & $0.38$ & $0.89$ & $1.0$ & $1.0$ & $1.0$ & $1.0$ & $1.0$ \\
$\mu/2$ & $0.0$ & $0.0$ & $0.0$ & $0.0$ & $0.04$ & $0.20$ & $0.42$ & $0.77$ & $0.97$ & $0.98$ \\
$\mu$ & $0.0$ & $0.0$ & $0.0$ & $0.0$ & $0.01$ & $0.13$ & $0.24$ & $0.55$ & $0.75$ & $0.94$ \\
\toprule
\multicolumn{11}{c}{\boldmath$\sigma = 2$}\\
\midrule
\multirow{2}{*}{$\kappa$} & \multicolumn{10}{c}{$\mu$} \\
\cmidrule(l){2-11}
 & $2$ & $4$ & $8$ & $16$ & $32$ & $64$ & $128$ & $256$ & $512$ & $1024$ \\
\midrule
$1$ & $0.02$ & $0.88$ & $1.0$ & $1.0$ & $1.0$ & $1.0$ & $1.0$ & $1.0$ & $1.0$ & $1.0$ \\
$\sqrt[]{\mu}$ & $0.01$ & $0.03$ & $0.55$ & $0.99$ & $1.0$ & $1.0$ & $1.0$ & $1.0$ & $1.0$ & $1.0$ \\
$\mu/2$ & $0.0$ & $0.0$ & $0.0$ & $0.0$ & $0.07$ & $0.18$ & $0.48$ & $0.67$ & $0.93$ & $0.99$ \\
$\mu$ & $0.0$ & $0.0$ & $0.0$ & $0.0$ & $0.00$ & $0.04$ & $0.25$ & $0.60$ & $0.80$ & $0.97$ \\
\toprule
\multicolumn{11}{c}{\boldmath$\sigma = \sqrt[]{n}$}\\
\midrule
\multirow{2}{*}{$\kappa$} & \multicolumn{10}{c}{$\mu$} \\
\cmidrule(l){2-11}
 & $2$ & $4$ & $8$ & $16$ & $32$ & $64$ & $128$ & $256$ & $512$ & $1024$ \\
\midrule
$1$ & $0.33$ & $1.0$ & $1.0$ & $1.0$ & $1.0$ & $1.0$ & $1.0$ & $1.0$ & $1.0$ & $1.0$ \\
$\sqrt[]{\mu}$ & $0.35$ & $0.67$ & $0.97$ & $1.0$ & $1.0$ & $1.0$ & $1.0$ & $1.0$ & $1.0$ & $1.0$ \\
$\mu/2$ & $0.40$ & $0.78$ & $0.95$ & $1.0$ & $1.0$ & $1.0$ & $1.0$ & $1.0$ & $1.0$ & $1.0$ \\
$\mu$ & $0.0$ & $0.0$ & $0.0$ & $0.0$ & $0.0$ & $0.07$ & $0.28$ & $0.50$ & $0.80$ & $0.93$ \\
\toprule
\multicolumn{11}{c}{\boldmath$\sigma = n/2$}\\
\midrule
\multirow{2}{*}{$\kappa$} & \multicolumn{10}{c}{$\mu$} \\
\cmidrule(l){2-11}
 & $2$ & $4$ & $8$ & $16$ & $32$ & $64$ & $128$ & $256$ & $512$ & $1024$ \\
\midrule
$1$ & $1.0$ & $1.0$ & $1.0$ & $1.0$ & $1.0$ & $1.0$ & $1.0$ & $1.0$ & $1.0$ & $1.0$ \\
$\sqrt[]{\mu}$ & $1.0$ & $1.0$ & $1.0$ & $1.0$ & $1.0$ & $1.0$ & $1.0$ & $1.0$ & $1.0$ & $1.0$ \\
$\mu/2$ & $1.0$ & $1.0$ & $1.0$ & $1.0$ & $1.0$ & $1.0$ & $1.0$ & $1.0$ & $1.0$ & $1.0$ \\
$\mu$ & $0.0$ & $0.0$ & $0.0$ & $0.0$ & $0.02$ & $0.12$ & $0.29$ & $0.60$ & $0.81$ & $0.98$ \\
\bottomrule
\end{tabular}
\end{table}

For small values of $\sigma = \{1, 2\}$ and $\kappa = 1$, with sufficiently many individuals~$\mu=(n/2+1)\cdot\kappa$, every individual can create its own niche, and since only one individual is allowed to be the winner, the individuals are spread in the search space reaching both optima with $1.0$ success rate. In this scenario, since we are allowing sufficiently many individuals in the population, individuals can be initialised in any of both branches, climbing down the branch reaching the opposite branch and reaching the other extreme optima as shown in Figure~\ref{fig:scenario1} (in this case we only show the behaviour of the population with $\mu = \{8,16,32\}$, with $\mu\geq 8$ have the same behaviour).

\begin{figure*}[ht]
    \centering
    \begin{subfigure}[t]{0.31\textwidth}
    	\centering
        \resizebox{\linewidth}{!}{
        	\begin{tikzpicture}[domain=0:30,xscale=0.15,yscale=0.25, scale=1, every 
            shadow/.style={shadow xshift=0.0mm, shadow yshift=0.4mm}]
        		\tikzstyle{helpline}=[black,thick];
        		\tikzstyle{function}=[black,very thick];
                \tikzstyle{optimum}=[red,very thick];
                \tikzstyle{winner}=[blue,very thick];
                \tikzstyle{cleared}=[green,very thick];
        		\draw[black!40,line width=0.2pt,xstep=1,ystep=1] (0,0) grid (30.5,15.5);
        		\draw[function] (0,15) -- (15,0) -- (30,15);
        		\draw[helpline, thin, -triangle 45] (0,0) -- (0,17);
        		\draw[helpline, thin, -triangle 45] (0,0) -- (32,0) node[right] {\small $\ones{x}$};
        		\draw[helpline] (0,0) -- (0,16);
        		\draw[helpline] (0,0) -- (31,0);
        		\draw[helpline] (0,0) -- (-1,0) node[left] {\small $n/2$};
        		\draw[helpline] (0,0) -- (0,-1) node[below] {\small $0$};
        		\draw[helpline] (0,15) -- (-1,15) node[left] {\small $n$};
        		\draw[helpline] (15,0) -- (15,-1) node[below] {\small $n/2$};
        		\draw[helpline] (30,0) -- (30,-1) node[below] {\small $n$};
                \filldraw[optimum] (0,15) circle (6pt);
                \foreach \x/\y in {1/14,2/13,3/12}
                	\filldraw[winner] (\x,\y) circle (6pt);
                \filldraw[optimum] (30,15) circle (6pt);
                \foreach \x/\y in {29/14,28/13,27/12}
                	\filldraw[winner] (\x,\y) circle (6pt);
        	\end{tikzpicture}
		}
        \caption{$\mu = 8$}
        \label{fig:mu8}
    \end{subfigure}
    ~
    \begin{subfigure}[t]{0.31\textwidth}
    	\centering
        \resizebox{\linewidth}{!}{
       \begin{tikzpicture}[domain=0:30,xscale=0.15,yscale=0.25, scale=1, every 
       shadow/.style={shadow xshift=0.0mm, shadow yshift=0.4mm}]
				\tikzstyle{helpline}=[black,thick];
				\tikzstyle{function}=[black,very thick];
                \tikzstyle{optimum}=[red,very thick];
                \tikzstyle{winner}=[blue,very thick];
                \tikzstyle{cleared}=[green,very thick];
                \draw[black!40,line width=0.2pt,xstep=1,ystep=1] (0,0) grid (30.5,15.5);
                \draw[function] (0,15) -- (15,0) -- (30,15);
                \draw[helpline, thin, -triangle 45] (0,0) -- (0,17);
                \draw[helpline, thin, -triangle 45] (0,0) -- (32,0) node[right] 
                {\small $\ones{x}$};
                \draw[helpline] (0,0) -- (0,16);
                \draw[helpline] (0,0) -- (31,0);
                \draw[helpline] (0,0) -- (-1,0) node[left] {\small $n/2$};
                \draw[helpline] (0,0) -- (0,-1) node[below] {\small $0$};
                \draw[helpline] (0,15) -- (-1,15) node[left] {\small $n$};
                \draw[helpline] (15,0) -- (15,-1) node[below] {\small $n/2$};
                \draw[helpline] (30,0) -- (30,-1) node[below] {\small $n$};
                \filldraw[optimum] (0,15) circle (6pt);
                \foreach \x/\y in {1/14,2/13,3/12,4/11,5/10,6/9,7/8}
                	\filldraw[winner] (\x,\y) circle (6pt);
                \filldraw[optimum] (30,15) circle (6pt);
                \foreach \x/\y in {29/14,28/13,27/12,26/11,25/10,24/9,23/8}
                	\filldraw[winner] (\x,\y) circle (6pt);
			\end{tikzpicture}
		}
        \caption{$\mu = 16$}
        \label{fig:mu16}
    \end{subfigure}
    ~
    \begin{subfigure}[t]{0.31\textwidth}
    \centering
        \resizebox{\linewidth}{!}{
    		\begin{tikzpicture}[domain=0:30,xscale=0.15,yscale=0.25, scale=1, every 
            shadow/.style={shadow xshift=0.0mm, shadow yshift=0.4mm}]
                \tikzstyle{helpline}=[black,thick];
                \tikzstyle{function}=[black,very thick];
                \tikzstyle{optimum}=[red,very thick];
                \tikzstyle{winner}=[blue,very thick];
                \tikzstyle{cleared}=[green,very thick];
                \draw[black!40,line width=0.2pt,xstep=1,ystep=1] (0,0) grid (30.5,15.5);
                \draw[function] (0,15) -- (15,0) -- (30,15);
                \draw[helpline, thin, -triangle 45] (0,0) -- (0,17);
                \draw[helpline, thin, -triangle 45] (0,0) -- (32,0) node[right] {\small 
                $\ones{x}$};
                \draw[helpline] (0,0) -- (0,16);
                \draw[helpline] (0,0) -- (31,0);
                \draw[helpline] (0,0) -- (-1,0) node[left] {\small $n/2$};
                \draw[helpline] (0,0) -- (0,-1) node[below] {\small $0$};
                \draw[helpline] (0,15) -- (-1,15) node[left] {\small $n$};
                \draw[helpline] (15,0) -- (15,-1) node[below] {\small $n/2$};
                \draw[helpline] (30,0) -- (30,-1) node[below] {\small $n$};
                \filldraw[optimum] (0,15) circle (6pt);
                \foreach \x/\y in {1/14,2/13,3/12,4/11,5/10,6/9,7/8,8/7,9/6,10/5,11/4,12/3,13/2,14/1}
                	\filldraw[winner] (\x,\y) circle (6pt);
                \filldraw[optimum] (30,15) circle (6pt);
                \foreach \x/\y in 
                {29/14,28/13,27/12,26/11,25/10,24/9,23/8,22/7,21/6,20/5,19/4,18/3,17/2,16/1,15/0}
                	\filldraw[winner] (\x,\y) circle (6pt);
			\end{tikzpicture}
		}
        \caption{$\mu = 32$}
        \label{fig:mu32}
    \end{subfigure}
    \caption{Snapshot of a typical population at the time both optima were reached, showing the spread of individuals in branches of \twomax for $n=30$, $\sigma = 1$ and $\kappa = 1$. Where the red (extreme) points represent optimal individuals, blue points represent niche winners. The rows on the grid represents the fitness value of an individual and its position on \twomax and the vertical lines represent the partitioned search space (niches) created by the parameter $\sigma$.}
    \label{fig:scenario1}
\end{figure*}
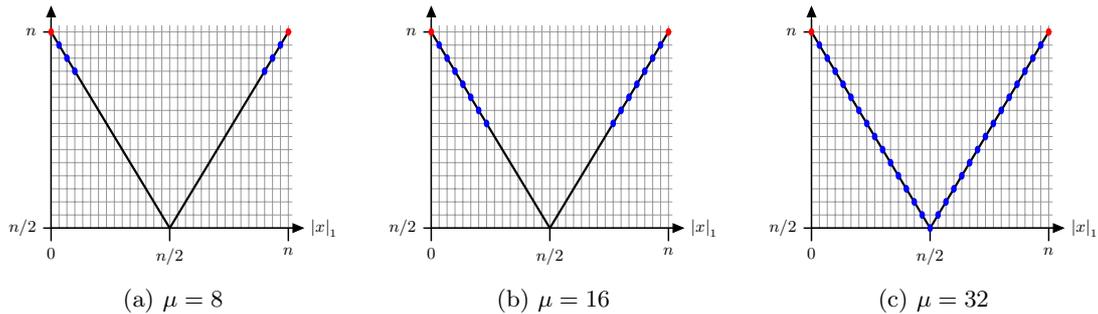

The previous experiment set-up confirms what is mentioned in Section~\ref{sec:phenotypic_clearing}: with a small \clearingradius, \nichecapacity and large enough \emph{population size}, the algorithm is able to exhaustively explore the search space without losing the progress reached so far. The population size provides enough pressure to optimise \twomax. In this scenario the small differences between individuals allow the algorithm to discriminate between the two branches or optima, this forces to have individuals on both branches or occupy all niches supporting the statement of Theorem~\ref{the:optunitation}. Individuals with the same phenotype may have a large Hamming distance, creating winners with the same fitness (as proved by Corollary \ref{cor:clearing-small-niches-genotypic} in Section~\ref{sec:genotypic_clearing}).

It is mentioned in \cite{Petrowski1996} that while the value of the \nichecapacity $\kappa > \mu/2$ approaches the size of the population, the clearing effect vanishes and the search becomes a standard EA. This effect is verified in the present experimental approach. For a large $\kappa$ and $\sigma=\{1,2\}$, one branch takes over, removing the individuals on the other branch reducing the performance of the algorithm. In order to avoid this, it is necessary to define $\mu\geq (n+1)\cdot \kappa$ to occupy all the winners slots and create new winners in other niches (Theorem~\ref{the:optunitation}) or increase the \clearingradius to $\sqrt{n} \leq \sigma \leq n/2$ in order to let more individuals participate in the niche. A reduced \nichecapacity $1 \leq \kappa \leq \sqrt[]{\mu}$ seems to have a better effect exploring both branches. 

Now that theory and practice have shown that a small \clearingradius, \nichecapacity and large enough population size $\mu$ are able to optimise \twomax, and in order to avoid the take over of a certain branch due to a large \nichecapacity it is necessary to either have: (1) a large enough population, or (2) to increase the \clearingradius. For (1) we already have defined and proved that one way to overcome this scenario is to define $\mu$ according to Theorem~\ref{the:optunitation}.

In the case of large niches (2), with $\sigma = \{\sqrt[]{n},n/2\}$ it is possible to divide the search space in fewer niches.  Here the individuals have the opportunity to move, change inside the niche, reach other niches allowing the movement between branches, reaching the opposite optimum. Since it is possible to reach other niches, defining the \nichecapacity $\kappa=\sqrt[]{\mu}$ will allow to have more winners in each niche but will still allow to move inside the niche.

For example, with $\sigma = \sqrt[] {n}$, $\kappa=\sqrt[]{\mu}$ and $\mu\geq 8$, the algorithm is able to reach both optima with at least $0.97$ success rate. In Figure \ref{fig:scenario2} the effect of $\kappa$ can be seen with sufficiently many individuals. With restrictive \nichecapacity (Figure~\ref{fig:kappa1}), the population is scattered in the search space but when the \nichecapacity is increased, the spread is reduced as we allow more individuals to be part of each niche (Figure~\ref{fig:kappasqrtmu} and \ref{fig:kappamuhalf}). This behaviour can be generalised and is more evident for larger values of $\mu$.

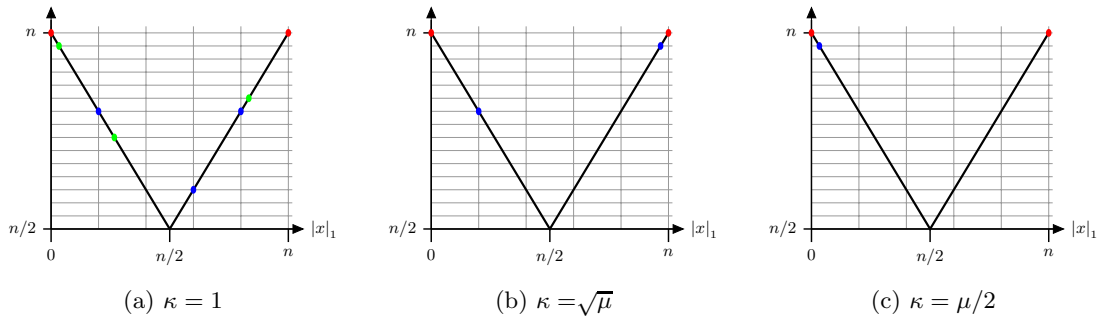
\begin{figure*}[ht]
    \centering
    \begin{subfigure}[t]{0.31\textwidth}
        \centering
        \resizebox{\linewidth}{!}{
        	\begin{tikzpicture}[domain=0:30,xscale=0.15,yscale=0.25, scale=1, 
            every shadow/.style={shadow xshift=0.0mm, shadow yshift=0.4mm}]
                \tikzstyle{helpline}=[black,thick];
        		\tikzstyle{function}=[black,very thick];
                \tikzstyle{optimum}=[red,very thick];
                \tikzstyle{winner}=[blue,very thick];
                \tikzstyle{cleared}=[green,very thick];
        		\draw[black!40,line width=0.2pt,xstep=6,ystep=1] (0,0) grid (30.5,15.5);
                \draw[function] (0,15) -- (15,0) -- (30,15);
        		\draw[helpline, thin, -triangle 45] (0,0) -- (0,17);
        		\draw[helpline, thin, -triangle 45] (0,0) -- (32,0) node[right] {\small     
                $\ones{x}$};
        		\draw[helpline] (0,0) -- (0,16);
        		\draw[helpline] (0,0) -- (31,0);
        		\draw[helpline] (0,0) -- (-1,0) node[left] {\small $n/2$};
        		\draw[helpline] (0,0) -- (0,-1) node[below] {\small $0$};
        		\draw[helpline] (0,15) -- (-1,15) node[left] {\small $n$};
        		\draw[helpline] (15,0) -- (15,-1) node[below] {\small $n/2$};
        		\draw[helpline] (30,0) -- (30,-1) node[below] {\small $n$};
                \filldraw[optimum] (0,15) circle (6pt);
                \filldraw[cleared] (1,14) circle (6pt);
                \filldraw[winner] (6,9) circle (6pt);
                \filldraw[cleared] (8,7) circle (6pt);
                \filldraw[optimum] (30,15) circle (6pt);
                \filldraw[cleared] (25,10) circle (6pt);
                \filldraw[winner] (24,9) circle (6pt);
                \filldraw[winner] (18,3) circle (6pt);
        	\end{tikzpicture}
		}
        \caption{$\kappa = 1$}
        \label{fig:kappa1}
    \end{subfigure}
    ~
    \begin{subfigure}[t]{0.31\textwidth}
        \centering
        \resizebox{\linewidth}{!}{
        	\begin{tikzpicture}[domain=0:30,xscale=0.15,yscale=0.25, scale=1, 
            every shadow/.style={shadow xshift=0.0mm, shadow yshift=0.4mm}]
                \tikzstyle{helpline}=[black,thick];
        		\tikzstyle{function}=[black,very thick];
                \tikzstyle{optimum}=[red,very thick];
                \tikzstyle{winner}=[blue,very thick];
                \tikzstyle{cleared}=[green,very thick];
        		\draw[black!40,line width=0.2pt,xstep=6,ystep=1] (0,0) grid (30.5,15.5);
                \draw[function] (0,15) -- (15,0) -- (30,15);
        		\draw[helpline, thin, -triangle 45] (0,0) -- (0,17);
        		\draw[helpline, thin, -triangle 45] (0,0) -- (32,0) node[right] {\small     
                $\ones{x}$};
        		\draw[helpline] (0,0) -- (0,16);
        		\draw[helpline] (0,0) -- (31,0);
        		\draw[helpline] (0,0) -- (-1,0) node[left] {\small $n/2$};
        		\draw[helpline] (0,0) -- (0,-1) node[below] {\small $0$};
        		\draw[helpline] (0,15) -- (-1,15) node[left] {\small $n$};
        		\draw[helpline] (15,0) -- (15,-1) node[below] {\small $n/2$};
        		\draw[helpline] (30,0) -- (30,-1) node[below] {\small $n$};
                \filldraw[optimum] (0,15) circle (6pt);
                \filldraw[winner] (6,9) circle (6pt);
                \filldraw[optimum] (30,15) circle (6pt);
                \filldraw[winner] (29,14) circle (6pt);
        	\end{tikzpicture}
		}
        \caption{$\kappa = \sqrt[]{\mu}$}
        \label{fig:kappasqrtmu}
    \end{subfigure}
    ~
    \begin{subfigure}[t]{0.31\textwidth}
        \centering
        \resizebox{\linewidth}{!}{
        	\begin{tikzpicture}[domain=0:30,xscale=0.15,yscale=0.25, scale=1, 
            every shadow/.style={shadow xshift=0.0mm, shadow yshift=0.4mm}]
                \tikzstyle{helpline}=[black,thick];
        		\tikzstyle{function}=[black,very thick];
                \tikzstyle{optimum}=[red,very thick];
                \tikzstyle{winner}=[blue,very thick];
                \tikzstyle{cleared}=[green,very thick];
        		\draw[black!40,line width=0.2pt,xstep=6,ystep=1] (0,0) grid (30.5,15.5);
                \draw[function] (0,15) -- (15,0) -- (30,15);
        		\draw[helpline, thin, -triangle 45] (0,0) -- (0,17);
        		\draw[helpline, thin, -triangle 45] (0,0) -- (32,0) node[right] {\small     
                $\ones{x}$};
        		\draw[helpline] (0,0) -- (0,16);
        		\draw[helpline] (0,0) -- (31,0);
        		\draw[helpline] (0,0) -- (-1,0) node[left] {\small $n/2$};
        		\draw[helpline] (0,0) -- (0,-1) node[below] {\small $0$};
        		\draw[helpline] (0,15) -- (-1,15) node[left] {\small $n$};
        		\draw[helpline] (15,0) -- (15,-1) node[below] {\small $n/2$};
        		\draw[helpline] (30,0) -- (30,-1) node[below] {\small $n$};
                \filldraw[optimum] (0,15) circle (6pt);
                \filldraw[winner] (1,14) circle (6pt);
                \filldraw[optimum] (30,15) circle (6pt);
        	\end{tikzpicture}
		}
        \caption{$\kappa = \mu/2$}
        \label{fig:kappamuhalf}
    \end{subfigure}
    \caption{Snapshot of a typical population at the time both optima were reached, showing the spread of individuals in branches of \twomax for $n=30$, $\sigma = \sqrt[]{n}$ and $\mu = 8$. Where the red (extreme) points represent optimal individuals, blue points represent niche winners, and the green points represent cleared individuals. The rows on the grid represents the fitness value of an individual and its position on \twomax and the columns represent the partitioned search space (niches) created by the parameter $\sigma$.}
    \label{fig:scenario2}
\end{figure*}

The theoretical results described in Section~\ref{sec:large_niches} are confirmed by the previous experimental results, a large enough population is necessary in order to fill the positions of the winner $\winner$ with $\kappa$ winners, then force those $\kappa$ winners with the rest of the population to be subject to a random walk, where it is just necessary for at least one individual to reach the next niche as mentioned in Section~\ref{sec:pop-dyn}. In the case of \twomax, it is after the repetition of moving, climbing down through different niches for a certain period of time when some individual is able to reach both optima as mentioned in Theorem~\ref{the:few-niches-twomax} and confirmed by the experiments.

Now we have defined the conditions where the algorithm is able to optimise \twomax, we can set-up the parameters in a more informed/smart way. With $\mu \geq 2$ it is possible to optimise \twomax if $\sigma$ and $\kappa$ are chosen appropriately. For example, with $\sigma=n/2$ (as the minimum distance required to distinguish between one branch and the other), $\kappa=\{1,\sqrt[]{\mu}, \mu/2\}$ and $\mu \geq 2$, the algorithm is able to optimise \twomax because there is always an individual moving around that is able to reach a new niche (Figure~\ref{fig:scenario3}), and finally achieve $1.0$ success rate.

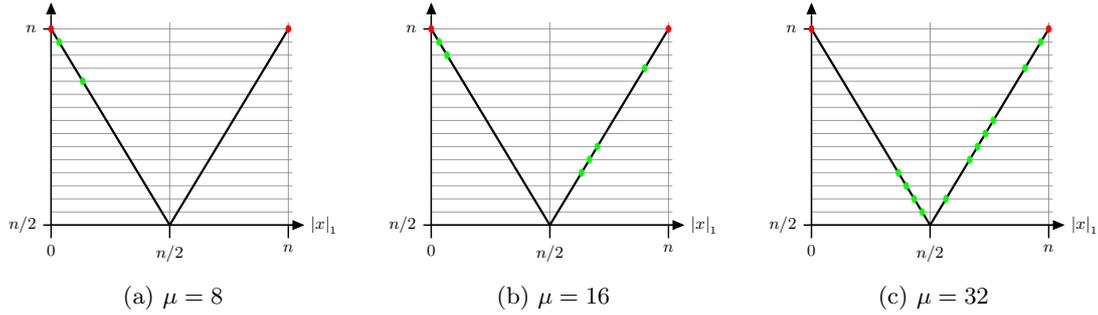
\begin{figure*}[ht]
    \centering
    \begin{subfigure}[t]{0.31\textwidth}
        \centering
        \resizebox{\linewidth}{!}{
        	\begin{tikzpicture}[domain=0:30,xscale=0.15,yscale=0.25, scale=1, 
            every shadow/.style={shadow xshift=0.0mm, shadow yshift=0.4mm}]
                \tikzstyle{helpline}=[black,thick];
        		\tikzstyle{function}=[black,very thick];
                \tikzstyle{optimum}=[red,very thick];
                \tikzstyle{winner}=[blue,very thick];
                \tikzstyle{cleared}=[green,very thick];
        		\draw[black!40,line width=0.2pt,xstep=15,ystep=1] (0,0) grid (30.5,15.5);
                \draw[function] (0,15) -- (15,0) -- (30,15);
        		\draw[helpline, thin, -triangle 45] (0,0) -- (0,17);
        		\draw[helpline, thin, -triangle 45] (0,0) -- (32,0) node[right] {\small     
                $\ones{x}$};
        		\draw[helpline] (0,0) -- (0,16);
        		\draw[helpline] (0,0) -- (31,0);
        		\draw[helpline] (0,0) -- (-1,0) node[left] {\small $n/2$};
        		\draw[helpline] (0,0) -- (0,-1) node[below] {\small $0$};
        		\draw[helpline] (0,15) -- (-1,15) node[left] {\small $n$};
        		\draw[helpline] (15,0) -- (15,-1) node[below] {\small $n/2$};
        		\draw[helpline] (30,0) -- (30,-1) node[below] {\small $n$};
                \filldraw[optimum] (0,15) circle (6pt);
                \filldraw[cleared] (1,14) circle (6pt);
                \filldraw[cleared] (4,11) circle (6pt);
                \filldraw[optimum] (30,15) circle (6pt);
        	\end{tikzpicture}
		}
        \caption{$\mu = 8$}
        \label{fig:mu81}
    \end{subfigure}
    ~
    \begin{subfigure}[t]{0.31\textwidth}
        \centering
        \resizebox{\linewidth}{!}{
        	\begin{tikzpicture}[domain=0:30,xscale=0.15,yscale=0.25, scale=1, 
            every shadow/.style={shadow xshift=0.0mm, shadow yshift=0.4mm}]
                \tikzstyle{helpline}=[black,thick];
        		\tikzstyle{function}=[black,very thick];
                \tikzstyle{optimum}=[red,very thick];
                \tikzstyle{winner}=[blue,very thick];
                \tikzstyle{cleared}=[green,very thick];
        		\draw[black!40,line width=0.2pt,xstep=15,ystep=1] (0,0) grid (30.5,15.5);
                \draw[function] (0,15) -- (15,0) -- (30,15);
        		\draw[helpline, thin, -triangle 45] (0,0) -- (0,17);
        		\draw[helpline, thin, -triangle 45] (0,0) -- (32,0) node[right] {\small     
                $\ones{x}$};
        		\draw[helpline] (0,0) -- (0,16);
        		\draw[helpline] (0,0) -- (31,0);
        		\draw[helpline] (0,0) -- (-1,0) node[left] {\small $n/2$};
        		\draw[helpline] (0,0) -- (0,-1) node[below] {\small $0$};
        		\draw[helpline] (0,15) -- (-1,15) node[left] {\small $n$};
        		\draw[helpline] (15,0) -- (15,-1) node[below] {\small $n/2$};
        		\draw[helpline] (30,0) -- (30,-1) node[below] {\small $n$};
                \filldraw[optimum] (0,15) circle (6pt);
                \filldraw[cleared] (1,14) circle (6pt);
                \filldraw[cleared] (2,13) circle (6pt);
                \filldraw[optimum] (30,15) circle (6pt);
                \filldraw[cleared] (27,12) circle (6pt);
                \filldraw[cleared] (21,6) circle (6pt);
                \filldraw[cleared] (20,5) circle (6pt);
                \filldraw[cleared] (19,4) circle (6pt);
        	\end{tikzpicture}
		}
        \caption{$\mu = 16$}
        \label{fig:mu161}
    \end{subfigure}
    ~
    \begin{subfigure}[t]{0.31\textwidth}
        \centering
        \resizebox{\linewidth}{!}{
        	\begin{tikzpicture}[domain=0:30,xscale=0.15,yscale=0.25, scale=1, 
            every shadow/.style={shadow xshift=0.0mm, shadow yshift=0.4mm}]
                \tikzstyle{helpline}=[black,thick];
        		\tikzstyle{function}=[black,very thick];
                \tikzstyle{optimum}=[red,very thick];
                \tikzstyle{winner}=[blue,very thick];
                \tikzstyle{cleared}=[green,very thick];
        		\draw[black!40,line width=0.2pt,xstep=15,ystep=1] (0,0) grid (30.5,15.5);                
                \draw[function] (0,15) -- (15,0) -- (30,15);
        		\draw[helpline, thin, -triangle 45] (0,0) -- (0,17);
        		\draw[helpline, thin, -triangle 45] (0,0) -- (32,0) node[right] {\small     
                $\ones{x}$};
        		\draw[helpline] (0,0) -- (0,16);
        		\draw[helpline] (0,0) -- (31,0);
        		\draw[helpline] (0,0) -- (-1,0) node[left] {\small $n/2$};
        		\draw[helpline] (0,0) -- (0,-1) node[below] {\small $0$};
        		\draw[helpline] (0,15) -- (-1,15) node[left] {\small $n$};
        		\draw[helpline] (15,0) -- (15,-1) node[below] {\small $n/2$};
        		\draw[helpline] (30,0) -- (30,-1) node[below] {\small $n$};
                \filldraw[optimum] (0,15) circle (6pt);
                \filldraw[cleared] (11,4) circle (6pt);
                \filldraw[cleared] (12,3) circle (6pt);
                \filldraw[cleared] (13,2) circle (6pt);
                \filldraw[cleared] (14,1) circle (6pt);
                \filldraw[optimum] (30,15) circle (6pt);
				\filldraw[cleared] (29,14) circle (6pt);
                \filldraw[cleared] (27,12) circle (6pt);
                \filldraw[cleared] (23,8) circle (6pt);
                \filldraw[cleared] (22,7) circle (6pt);
                \filldraw[cleared] (21,6) circle (6pt);
                \filldraw[cleared] (20,5) circle (6pt);
                \filldraw[cleared] (17,2) circle (6pt);
        	\end{tikzpicture}
		}
        \caption{$\mu = 32$}
        \label{fig:mu321}
    \end{subfigure}
    \caption{Snapshot of a typical population at the time both optima were reached, showing the spread of individuals in branches of \twomax for $n=30$, $\sigma = n/2$ and $\kappa = 1$. Where the red (extreme) points represent optimal individuals, blue points represent niche winners, and the green points represent cleared individuals. The rows on the grid represents the fitness value of an individual and its position on \twomax and the columns represent the partitioned search space (niches) created by the parameter $\sigma$.}
    \label{fig:scenario3}
\end{figure*}

\subsection{Population Size}
\label{sec:empanpopsize}

In this section we address the limitation of Theorem~\ref{the:few-niches-twomax} related to the steep requirement of the population size: $\mu\geq\kappa n^2/4$. As observed from Section~\ref{sec:empangen}, experiments suggest that a smaller population size is able to optimise \twomax. So for the analysis of the population size we have considered the population size $2\leq\mu\leq \kappa n^2/4$ in order to observe what is the minimum population size below the threshold $\kappa n^2/4$ able to optimise \twomax. With $\sigma=n/2$, and $\kappa=1$ for $n=\{30,100\}$ with phenotypic clearing, we report the average of generations of $100$ runs, the run is stopped if both optima have been found or the algorithm has reached a maximum of 1 million generations, this is enough time for the algorithms to converge on one or both optima. 

Figure~\ref{fig:n30mulogscale} shows the average number of generations among $100$ runs with $n=30$. Even for $\mu=2$ the average runtime is below the $1$ million threshold, hence some of the runs were able to find both optima on \twomax in fewer than $1$ million generations. The reason for this high average runtime is because once both individuals have reached one optimum, it will be one winner, and one loser subjected to a random walk until it gets replaced by an offspring of the winner. This process will continue until the loser reaches a Hamming distance of $n/2$ from the optimum to escape of the basin of attraction. Once this is achieved, it is just necessary for the individual to climb the other branch.

\begin{figure*}[ht]
    \centering
    \begin{subfigure}[t]{0.45\textwidth}
        \centering
        \resizebox{\linewidth}{!}{
        	\begin{tikzpicture}	
    			\begin{axis}[
                    width=7cm,
        			height=6cm,
                    xlabel=$\mu$ (logscale),
        			xtick={1,2,3,4,5,6,7,8},
                    xticklabels={2,4,8,16,32,64,128,256},
        			ylabel=$t$,
        			every axis y label/.style={rotate=0, black, at={(-0.13,0.5)},},
					]
						\addplot table[x=mu, y=trn30]{\averagethirtylog};
                    	\addplot table[x=mu, y=t0n30]{\averagethirtylog};
    			\end{axis}
		\end{tikzpicture}
		}
        \caption{$n=30$}
        \label{fig:n30mulogscale}
    \end{subfigure}
    \begin{subfigure}[t]{0.45\textwidth}
        \centering
        \resizebox{\linewidth}{!}{
        	\begin{tikzpicture}	
        	\begin{axis}[
        		width=7cm,
        		height=6cm,
        		xlabel=$\mu$ (logscale),
        		xtick={1,2,3,4,5,6,7,8,9,10,11},
                xticklabels={2,4,8,16,32,64,128,256,512,1024,2048},
        		x tick label style={rotate=45, anchor=north east, inner sep=0mm},
        		ylabel=$t$,
        		every axis y label/.style={rotate=0, black, at={(-0.13,0.5)},},
        	]
        		\addplot table[x=mu, y=trn100]{\averagehundredlog};
        		\addplot table[x=mu, y=t0n100]{\averagehundredlog};
        	\end{axis}
		\end{tikzpicture}
		}
        \caption{$n=100$}
        \label{fig:n100mulogscale}
    \end{subfigure}
    \caption{The average number of generations measured among $100$ runs at the time both optima were found on \twomax or $t=1$ million generations have been reached for $n=30$ and $n=100$, $\sigma=n/2$, $\kappa=1$ and $2\leq\mu\leq \kappa n^2/4$ for the populations with randomised and biased initialisation (blue and red line, respectively).}
    \label{fig:averunmu}
\end{figure*}
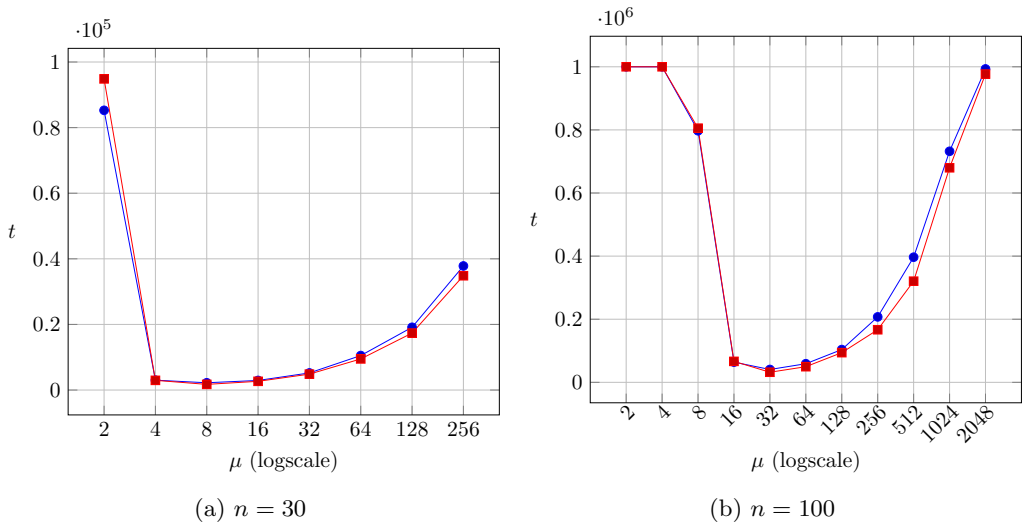

Most importantly, all population sizes in the interval $2\leq\mu\leq \kappa n^2/4$ are able to optimise \twomax; this experimental setting shows that with a smaller population size for a relatively small $n=30$ the algorithm is able to optimise \twomax. Another interesting characteristic of the algorithm is its capacity for escaping from a local optimum.

For $n=100$, in Figure~\ref{fig:n100mulogscale} it is more evident that with a small population size, it is not possible to escape from the basin of attraction of a peak, and the takeover happens before the population has the chance to evolve a distance of at least $n/2$ confirming the theoretical arguments described in Section~\ref{sec:smallpopdyn}. Once the population size is increased, the population is able to escape of the basin of attraction. The most interesting result shown in Figure~\ref{fig:averunmu} is that even with a population size $\mu\leq\kappa n^2/4$ the algorithm is able to find both optima on \twomax (even if runs require more than 1 million generations), indicating that the quadratic dependence on~$n$ in $\kappa n^2/4$, is an artefact of our approach. 

Finally, for larger $\mu$ sizes and $n=100$ it can be seen that biased initialisation is noticeably faster than random initialisation, and as the population grows the difference between the means grows. One reason could be simply because one peak has already been found, and the algorithm only needs to find the remaining peak.

\subsection{Escaping from Different Basins of Attraction}
\label{sec:escdifbasatt}

Finally, in this section we show that the runtime analysis used in Section~\ref{sec:pop-dyn}, and used to prove the theoretical analysis on the general classes of example landscapes functions in Section~\ref{sec:genexaland} can be used for weighted peak functions with two peaks of different Hamming distances. For simplicity we restrict our attention to equal slopes and heights: $a_1=a_2=1$ and $b_1=b_2=0$.

We can simplify this class of $f_2$ functions by using that the \muea is \emph{unbiased} as defined by \cite{Lehre2012}: simply speaking, the algorithm treats all bit values and all bit positions in the same way. Hence we can assume without loss of generality that $p_1=0^n$. We can further imagine shuffling all bits such that $p_2 = 0^{n-\Hamm{p_1}{p_2}}1^{\Hamm{p_1}{p_2}}$, which again does not change the stochastic behaviour of the \muea. Then all $f_2$ functions with $a_1=a_2=1$ and $b_1=b_2=0$ are covered by choosing $p_1=0^n$ and $p_2$ from the set $\{0^n,0^{n-1}1,0^{n-2}1^2,\ldots,1^n\}$. As can be seen the peaks $p_1$ and $p_2$ can be as close as $0^n$ and $0^{n-1}1$, or as far as $0^n$ and $1^n$. 

It contrast to the simple setting of \twomax where $\sigma=n/2$ makes most sense, in this more general setting it is necessary to define the \clearingradius $\sigma$ according to the Hamming distance between peaks. In particular, the following conditions should be satisfied.
\begin{enumerate}
\item $\sigma \le \Hamm{p_1}{p_2}$ as otherwise one peak is contained in the \clearingradius around the other peak,
\item $\sigma \le n/2$ as otherwise a niche can contain the majority of search points in the search space, leading to potentially exponential times to escape from the basin of attraction of a local optimum if $\sigma \ge (1+\Omega(1)) \cdot n/2$, and
\item $\sigma \ge \Hamm{p_1}{p_2}/2$ as this is the minimum distance that distinguishes the two peaks.
\end{enumerate}
In the following we study two different choices of $\sigma$: the maximum value $\sigma=\min\{\Hamm{p_1}{p_2},n/2\}$ that satisfied the above constraints, and the minimum feasible value, $\sigma=\Hamm{p_1}{p_2}/2$. These two choices allow us to investigate the effect of choosing large or small niches in this setting.

We use genotypic clearing with $\kappa=1$ and we make use of the results from Section~\ref{sec:empanpopsize} to define $\mu=32$ as a population size able to optimise \twomax for large $n=100$. We report the average number of generations of $100$ runs, with the same stopping criterion: both optima have been found or the algorithm has reached a maximum of 1 million generations. 

For the case of large \clearingradius, $\sigma=\min\{\Hamm{p_1}{p_2},n/2\}$, Figure~\ref{fig:hamming1} shows that it is possible to find both optima efficiently across the whole range of $\Hamm{p_1}{p_2}$. For random initialisation there are hardly any performance differences, except for a drop in the runtime when the two peaks get very close. 
For the case of biased populations we see differences by a small constant factor: the closer the peaks, the more difficult it is to escape (or find the new niche) since it requires to flip a specific number of bits to find the other optima. But as the two peaks move away both initialisation methods seem to behave the same indicating that the arguments used in Sections~\ref{sec:pop-dyn} and \ref{sec:genexaland} reflect correctly how the algorithm behaves.

Figure~\ref{fig:hamming2} shows that with the smallest feasible clearing radius $\sigma=\Hamm{p_1}{p_2}/2$ the algorithm is still able to find both optima for all $\Hamm{p_1}{p_2}$, but the average runtime for biased initialisation is much higher compared to $\sigma=\min\{\Hamm{p_1}{p_2},n/2\}$. From observing the actual population dynamics during a run, it seems that the reason for this high number of generations is  because several niches are created around both peaks, \ie once a peak has been found (and a niche is formed around the peak), the population spreads out by forming many niches between $p_1$ and $p_2$.

In the case of biased initialisation it is necessary to jump between specific niches to reach the opposite peak, or make several jumps between different niches in order to escape from its basin of attraction, which leads to this high number of generations.

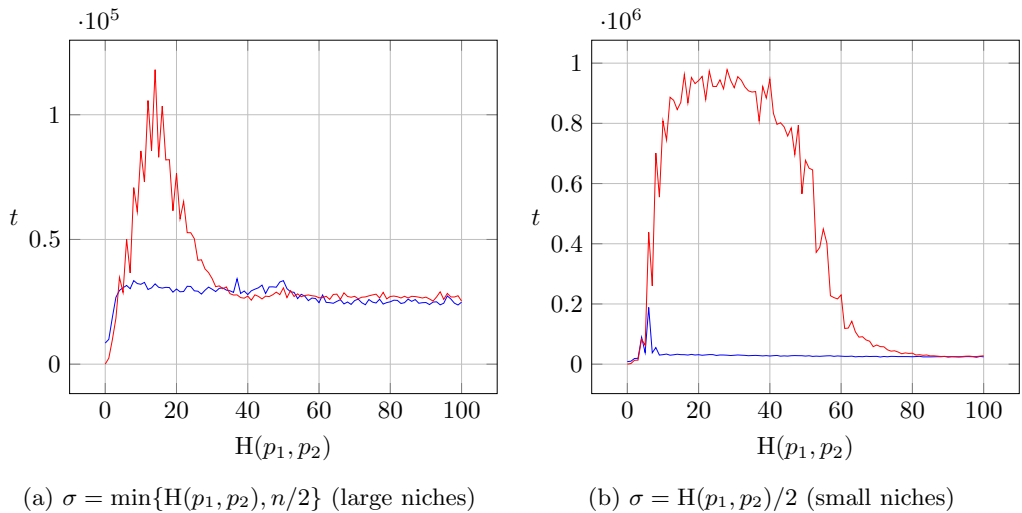
\begin{figure*}[ht]
    \centering
    \begin{subfigure}[t]{0.45\textwidth}
        \centering
        \resizebox{\linewidth}{!}{
        	\begin{tikzpicture}	
        	\begin{axis}[
        		width=6cm,
        		height=5cm,
        		xlabel=$\Hamm{p_1}{p_2}$,
        		ylabel=$t$,
        		every axis y label/.style={rotate=0, black, at={(-0.13,0.5)},},
        	]
        		\addplot[blue,smooth,tension=0] table[x=n, y=trn]{\hammingone};
                \addplot[red,smooth,tension=0] table[x=n, y=t0n]{\hammingone};
        	\end{axis}
		\end{tikzpicture}
		}
        \caption{$\sigma=\min\{\Hamm{p_1}{p_2},n/2\}$ (large niches)}
        \label{fig:hamming1}
    \end{subfigure}
    \begin{subfigure}[t]{0.45\textwidth}
        \centering
        \resizebox{\linewidth}{!}{
        	\begin{tikzpicture}	
        	\begin{axis}[
        		width=6cm,
        		height=5cm,
        		xlabel=$\Hamm{p_1}{p_2}$,
        		ylabel=$t$,
        		every axis y label/.style={rotate=0, black, at={(-0.13,0.5)},},
        	]
        		\addplot[blue,smooth,tension=0] table[x=n, y=trn]{\hammingtwo};
                \addplot[red,smooth,tension=0] table[x=n, y=t0n]{\hammingtwo};
        	\end{axis}
		\end{tikzpicture}
		}
        \caption{$\sigma=\Hamm{p_1}{p_2}/2$ (small niches)}
        \label{fig:hamming2}
    \end{subfigure}
    \caption{The average number of generations measured among $100$ runs at the time both peaks $p_1=0^n$ and $p_2=\{0^n,0^{n-1}1,0^{n-2}1^2,\ldots,1^n\}$ were found with $a_1=a_2=1$ and $b_1=b_2=0$ on the fitness landscape defined by $f_2$ or have reached $t=1$ million generations for $n=100$, with genotypic clearing with $\sigma=\min\{\Hamm{p_1}{p_2},n/2\}$ and $\sigma=\Hamm{p_1}{p_2}/2$, $\kappa=1$ and $\mu=32$ for populations with randomised (blue line) and biased (red line) initialisation.}
    \label{fig:hamming}
\end{figure*}

\section{Conclusions}
\label{sec:conclu}

The presented theoretical and empirical investigation has shown that \clearing possesses desirable and powerful characteristics. We have used rigorous theoretical analysis related to its ability to explore the landscape in two cases, small and large niches, and provide an insight into the behaviour of this diversity-preserving mechanism. 

In the case of small niches, we have proved that \clearing can exhaustively explore the landscape when the proper distance and parameters like \clearingradius, \nichecapacity and population size $\mu$ are set. Also, we have proved that \clearing is powerful enough to optimise all functions of unitation. In the case of large niches, \clearing has been proved to be as strong as other diversity-preserving mechanisms like \emph{deterministic crowding} and \emph{fitness sharing} since it is able to find both optima of the test function \twomax.

The analysis made has shown that our results can be easily extended to more general classes of examples landscapes. The analysis done for \twomax can easily be applied to different classes of bimodal problems using arguments based on how to escape the basin of attraction of one local optimum. We demonstrated this for functions with two complementary peaks and asymmetric variants of \twomax, consisting of a suboptimal peak with a smaller basin of attraction and an optimal peak with a larger basin of attraction. 

Our experimental results suggest that the same efficient performance also applies to bimodal functions where the two peaks have varying Hamming distances.
Here \clearing is able to escape from local optima with different basins of attractions by moving/jumping between niches formed by the \clearingradius. Defining $\sigma$ as the smallest possible value that allows to distinguish between peaks creates several small niches, forcing the individuals in the population to make several jumps between niches until an individual can reach the  basin of attraction of the other peak. This means that the algorithm requires more generations to find both peaks. But if $\sigma$ is defined as the maximum feasible value, $\sigma = \min\{\Hamm{p_1}{p_2}, n/2\}$, the \muea is faster and remarkably robust with respect to the Hamming distances between the two peaks. 
Nevertheless, both approaches allow the population to escape from different basin of attractions.

It remains an open problem to theoretically analyse the population dynamics of \clearing with more than 2 niches and to prove rigorously that \clearing is effective across a much broader range of problems, including problems with more than 2 peaks. This involves obtaining more detailed insights into the dynamics of the population, including the distribution and evolution of the losers across multiple niches. 

\section*{Acknowledgements}
The authors would like to thank Carsten Witt for his advice and comments on the analysis of the choice of the population size for \twomax, the anonymous reviewers of this manuscript and the previous FOGA publication for their many valuable suggestions. We also thank the Consejo Nacional de Ciencia y Tecnolog\'{i}a --- CONACYT (the Mexican National Council for Science and Technology) for the financial support under the grant no.\ 409151 and registration no.\ 264342. The research leading to these results has received funding from the European Union Seventh Framework Programme (FP7/2007-2013) under grant agreement no.\ 618091 (SAGE).

\bibliographystyle{apalike} 
\bibliography{example} 

\end{document}